\def\eqref#1{equation~\ref{#1}}
\def\1{\bm{1}}
\DeclareMathAlphabet{\mathsfit}{\encodingdefault}{\sfdefault}{m}{sl}
\SetMathAlphabet{\mathsfit}{bold}{\encodingdefault}{\sfdefault}{bx}{n}
\renewcommand\hl[1]{#1} 
\DeclareMathOperator*{\argmin}{arg\,min}
\title{Deep SE(3)-Equivariant Geometric Reasoning for Precise Placement Tasks}
\author{Ben Eisner \thanks{Corresponding author: \texttt{baeisner@andrew.cmu.edu}} \\
Carnegie Mellon University\\
\And
Yi Yang, Todor Davchev, Mel Veceric, Jon Scholz \\
Google DeepMind \\
\AND
David Held\\
Carnegie Mellon University\\
}
\begin{document}

\maketitle

\begin{abstract}
   Many robot manipulation tasks can be framed as geometric reasoning tasks, where an agent must be able to precisely manipulate an object into a position that satisfies the task from a set of initial conditions. Often, task success is defined based on the relationship between two objects - for instance, hanging a mug on a rack.  In such cases, the solution should be equivariant to the initial position of the objects as well as the agent, and invariant to the pose of the camera. This poses a challenge for learning systems which attempt to solve this task by learning directly from high-dimensional demonstrations: the agent must learn to be both equivariant as well as precise, which can be challenging without any inductive biases about the problem. In this work, we propose a method for precise relative pose prediction which is provably SE(3)-equivariant, can be learned from only a few demonstrations, and can generalize across variations in a class of objects. We accomplish this by factoring the problem into learning an SE(3) invariant task-specific representation of the scene and then interpreting this representation with novel geometric reasoning layers which are provably SE(3) equivariant. We demonstrate that our method can yield substantially more precise predictions in simulated placement tasks than previous methods trained with the same amount of data, and can accurately represent relative placement relationships data collected from real-world demonstrations. Supplementary information and videos can be found at \href{https://sites.google.com/view/reldist-iclr-2023}{this URL}.
\end{abstract}


\newcommand{\MA}[0]{{\mathcal{A}}}
\newcommand{\MB}[0]{{\mathcal{B}}}

\newcommand{\PA}[0]{\mathbf{P}_{\MA}}
\newcommand{\PB}[0]{\mathbf{P}_{\MB}}
\newcommand{\NA}[0]{N_{\MA}}
\newcommand{\NB}[0]{N_{\MB}}
\newcommand{\TAB}[0]{\mathbf{T}_{\MA \MB}}

\newcommand{\AtoB}[0]{\MA \to \MB}
\newcommand{\BtoA}[0]{\MB \to \MA}

\newcommand{\VA}[0]{\mathbf{\tilde{V}}_{\MA}}
\newcommand{\VB}[0]{\mathbf{\tilde{V}}_{\MB}}

\newcommand{\PhiA}[0]{\Phi_{\MA}}
\newcommand{\PhiB}[0]{\Phi_{\MB}}
\newcommand{\PhiAH}[0]{\hat{\Phi}_{\MA}}
\newcommand{\PhiBH}[0]{\hat{\Phi}_{\MB}}
\newcommand{\PhiAX}[0]{\Phi_{\MA}'}
\newcommand{\PhiBX}[0]{\Phi_{\MB}'}
\newcommand{\phiA}[1]{\phi^{\MA}_{#1}}
\newcommand{\phiB}[1]{\phi^{\MB}_{#1}}

\newcommand{\pa}[1]{p^{\MA}_{#1}}
\newcommand{\pb}[1]{p^{\MB}_{#1}}

\newcommand{\RAB}[0]{\mathbf{R}_{\MA\MB}}

\newcommand{\method}[0]{RelDist~}

\section{Introduction}
\label{section:intro}


A critical component of many robotic manipulation tasks is deciding how objects in the scene should move to accomplish the task. 
 Many tasks are based on the relative relationship between a set of objects, sometimes referred to as ``relative placement'' tasks~ (\cite{Simeonov2022-re, Pan2023-ua, Simeonov2023-ab,liu2022structdiffusion}). For instance, in order for a robot to set a table, the position of the silverware has a desired relationship relative to the plate. These types of problems can be described as geometric reasoning tasks - if you know the locations of each object in the scene, and you know what kind of relationship the objects should have to accomplish the task, then you can logically infer the target position of the objects. We would like our robotic agents to also possess these geometric reasoning faculties to solve such relative placement problems. 

In this work, we consider the task of training robotic agents to perform relative placement tasks directly from high-dimensional inputs by watching a small number of demonstrations. 
Although there have been many recent advances in predicting complex robot behaviors from raw sensor observations (\cite{lee2020learning,miki2022learning,yang2023neural,ha2022flingbot,akkaya2019solving}),
high-dimensional observations pose particular challenges in geometric reasoning tasks.  For example,  
the amount by which each object should be moved to reach the goal configuration should be SE(3)-equivariant to the initial locations of the object. 
However, prior work~(\cite{Pan2023-ua}) demonstrates that if a general-purpose neural network is trained on a small number of high-dimensional demonstrations with no additional inductive biases, it will typically not learn to be robust to the initial object configurations.  
Prior work to address this issue~(\cite{Pan2023-ua}) incorporates geometric reasoning into the network, but is not provably equivariant. Other work that is provably equivariant~(\cite{Simeonov2022-re})
 is outperformed both by (\citet{Pan2023-ua}) and by our method.


To achieve both strong empirical performance and provable equivariance, we propose a visual representation
which can be used for equivariant manipulation in relative placement tasks. Our key insight is to decouple representation learning into an invariant  representation learning step and an equivariant reasoning step. We achieve this using a novel layer for differentiable multilateration, inspired by work in true-range multilateration~(\cite{Zhou2009-of}). 
In this work, we propose the following contributions:
\begin{itemize}[leftmargin=.75cm]
    \item A novel dense representation for relative object placement tasks, which is geometrically interpretable and fully invariant under SE(3) transformations to objects in the scene. 
    \item A method for solving relative placement tasks with differentiable geometric reasoning
    using a novel layer for differentiable multilateration, which can be trained end-to-end from observations of a small number of demonstrations without additional labels. 
    \item A \textbf{provably SE(3)-equivariant} neural network architecture predicting this representation directly from high-dimensional raw inputs. 
    \item A set of simulated experiments which demonstrate superior placement performance in several relative placement tasks - both on precision metrics and on overall success. Our experiments also demonstrate that our method generalizes within a class of objects with reasonable variation, and can be applied to real-world demonstration datasets.
\end{itemize}
\section{Related Work}
\label{section:related}


\textbf{Object Pose Estimation}: One approach to solving relative placement tasks is via object pose estimation. 
Recently several approaches have proposed 
 using test-time optimization or correspondence detection to align current observations with demonstration observations with known pose~(\cite{Florence2018-te, Simeonov2023-ab, Simeonov2022-re}). However, 
(\citet{Simeonov2022-re})  requires one of the objects to remain in a fixed location, while (\citet{Simeonov2023-ab})
 requires user input to specify relationships of interest - our method has no such restriction and requires no user input. Most similar to our work is TAX-Pose~(\cite{Pan2023-ua}), which 
 computes cross-object correspondences to estimate a task-specific alignment between a pair of objects, correcting these correspondences with a learned residual. 
 Both TAX-Pose and our work attempt to learn to predict, for each point on an object, where that point should end up to accomplish the relative placement task. However, we compute this mapping quite differently - while TAX-Pose establishes ``cross-correspondence'' and updates with a residual vector to regress the next point's location, we learn an SE(3)-invariant representation (which we call RelDist) of the pair of objects, and infer the point's location from this representation using differentiable multilateration. 
By construction, our framework 
is fully $SE(3)$-invariant by construction, whereas TAX-Pose is not.





\textbf{$\mathbf{SE(3)}$ Equivariance in Visual Prediction}: Equivariance of geometric predictions under $SE(3)$ transformations of objects in a scene (including invariance to  camera transformations) is a desirable property of many visual prediction systems. Several works design provably equivariant prediction methods~(\cite{cohen2016group,weiler2019general,van2020plannable}) and use such methods for robot manipulation~(\cite{wang2022equivariant,huang2022equivariant,wang2022so,wang2022surprising}). Vector Neurons~(\cite{Deng2021-ps}) are used to design provably-SE(3) invariant and equivariant versions of standard point cloud analysis architectures like DGCNN~(\cite{Wang2019-jx}) and PointNet~(\cite{Qi2017-if}), which we incorporate in the feature-encoder in this work. Neural Descriptor Fields (NDF)~(\cite{Simeonov2022-re}) also use Vector Neurons~(\cite{Deng2021-ps}) to achieve provable equivariance, but suffer from suboptimal performance; 
NDF is outperformed by TAXPose~(\cite{Pan2023-ua}) which we compare to in this work. Other methods attempt to learn equivariance via training~(\cite{Pan2023-ua}). Another approach is to search over $SO(2)$ or $SO(3)$~(\cite{Zeng2021-na, Lin2023-il}), evaluating candidates with a scoring network to align objects in a scene for a given task. However, such methods have not been demonstrated to perform the precise SE(3) relative placement tasks used in this work.





\section{Background}
\label{section:background}

We consider deep geometric reasoning tasks based on point cloud observations, and build on top of several separate lines of prior work: 


\textbf{Multilateration:}
Given a receiver with an unknown position $p$ in 3D, a set of $K$ beacons with known positions $q_k$ in 3D space, and a set of $K$ measurements $r_k$ of the scalar distance from each beacon to the receiver, the multilateration task is to estimate the position of the receiver $p$. Multilateration is similar to triangulation, except that the direction of the beacons to the receiver is unknown. If the squared error cost is used, this reasoning task reduces to a non-linear least-squares optimization:
\begin{equation}
    \min_{p}\quad \sum_{k=1}^{K} \left( ||q_k - p||^2_2 - r_k^2 \right)^2
\end{equation}
There are several different approaches to solving this problem, as discussed in a  survey paper on multilateration (\cite{Sirola2010-lk}). We leverage the closed-form solution to this problem; see (\citet{Zhou2009-of}) for details.
    
\textbf{The orthogonal Procrustes problem:} 
Given two sets of $N$ corresponding 3D points $A$ and $B$, the orthogonal Procrustes problem is to find a rigid transform $T_{AB} \in (R_{AB}, t_{AB})$ which best aligns them. Specifically, if the squared error cost is used, this reasoning task reduces to a constrained least-squares optimization task:
\begin{equation}
\min_{R_{AB}, t_{AB}}\quad \sum_{i=1}^{N} \left|\left| R_{AB} A_i + t_{AB} - B_i \right|\right|^2_2,\quad\quad \text{s.t.} \quad R_{AB} \in SO(3)
\end{equation}
This problem has a well-known closed-form solution based on Singular Value Decomposition (SVD)~(\cite{Sorkine-Hornung2017-lv}).


\textbf{Equivariance/Invariance: }
This work considers two geometric functional properties: $SE(3)$ equivariance (Eq. \ref{eq:equivariance}) and invariance (Eq. \ref{eq:invariance}), under transform $T \in SE(3)$, which respectively satisfy:
\noindent\begin{minipage}{.5\linewidth}
\begin{equation}
\label{eq:equivariance}
f(x) = y \implies f(Tx) = Ty
\end{equation}
\end{minipage}%
\begin{minipage}{.5\linewidth}
\begin{equation}
\label{eq:invariance}
f(x) = y \implies f(Tx) = y
\end{equation}
\end{minipage}

\textbf{Point Cloud Analysis Architectures}: We make use of Dynamic Graph Convolutional Neural Networks (DGCNN) (\cite{Wang2019-cd}) as the primary backbone architecture for learning per-point features. Roughly, each layer of DGCNN performs a two-step operation: 1) it computes a per-point connectivity graph, roughly its neighborhood (in either Euclidean space or latent space), typically using K-Nearest Neighbors; and 2) for each point, performs some sort of learned aggregation of the local neighborhood, which is permutation-invariant. Stacking such layers, an architecture can flexibly capture global and local context of a point cloud.

On top of DGCNN, we also make use of Vector Neurons (\cite{Deng2021-ps}), which is a set of deep learning primitives which are designed to be SE(3)-Equivariant. These primitives can be directly dropped into DGCNN to replace existing layers. Roughly, Vector Neurons accomplish SE(3) equivariance by treating each point's Euclidean coordinate independently (through a batching operation), and constructing primitives that operate on the coordinates independently.





\section{Problem Statement}
\label{section:problem-statement}

\begin{wrapfigure}{r}{0.45\textwidth}
    \vspace{-.5cm}
    \includegraphics[width=.45\textwidth]{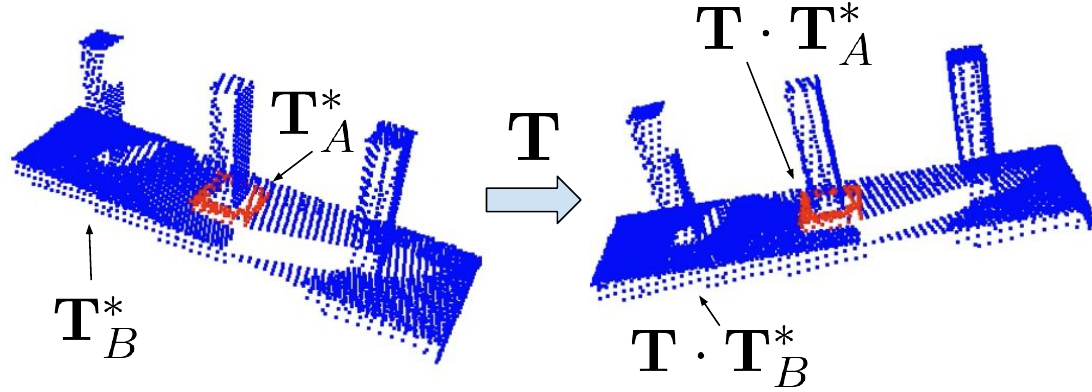}
    \caption{\hl{Invariance of relative placement tasks under transformations. In this case, a ring on peg maintains the same relative position under a rigid transformation $\mathbf{T}$.}}
    \label{fig:cross-pose}
    \vspace{-.4cm}
\end{wrapfigure}

\textbf{Relative Placement Tasks:} In this work, we consider relative placement tasks, as defined in prior work~(\cite{Simeonov2022-re,  Pan2023-ua, Simeonov2023-ab}). We borrow mathematical definitions of the task from (\citet{Pan2023-ua}).
Relative placement tasks require 
predicting a rigid body transform which transports a specific rigid object $\MA$ from an initial position to a desired, task-specific final position with respect to another rigid object $\MB$. 
Suppose $\mathbf{T}_{\MA}^*$ and $\mathbf{T}_{\MB}^*$ 
are $SE(3)$ poses for objects $\MA$ and $\MB$ respectively such that the placement task is complete, expressed with respect to some arbitrary world frame. Following \citet{Pan2023-ua}, the task is also considered complete if $\MA$ and $\MB$ are positioned such that, for some $\mathbf{T} \in SE(3)$:
\begin{equation}
    \text{RelPlace}(\mathbf{T_\MA}, \mathbf{T_\MB}) = \textbf{\textsc{Success}}  \text{ iff}~\exists \mathbf{T} \in SE(3) \text{ s.t. } \mathbf{T_{\MA}} = \mathbf{T} \circ \mathbf{T_{\MA}^{*}} \text{ and } \mathbf{T_{\MB}} = \mathbf{T} \circ \mathbf{T_{\MB}^{*}}.
\label{eq:relplace_def_simple}
\end{equation}
\textbf{Cross-Pose}: We are interested in predicting a transform which brings object $\MA$ to a goal position relative to object $\MB$, to satisfy the placement condition defined in Eq.~\ref{eq:relplace_def_simple}. Rather than estimating each object's pose independently and computing a desired relative transform, we would like to learn a function $\TAB = f(\PA, \PB)$ which takes as input point cloud observations of the two objects, $\PA, \PB$ and outputs an $SE(3)$ transform which would transport $\PA$ into their desired goal position with respect to $\MB$. \citet{Pan2023-ua} define this transform as the \textbf{cross-pose} for a specific instance of a relative placement task, as follows:
when objects $\MA$ and $\MB$ 
have been rigidly transformed from their goal configuration by $\mathbf{T}_\alpha$ and $\mathbf{T}_\beta$, respectively, the cross-pose is defined as:
\begin{equation}
\label{eq:xp_equivariance}
f(\mathbf{T}_\alpha\circ  \mathbf{T}_{\MA}^*,  \mathbf{T}_\beta \circ \mathbf{T}_{\MB}^*) = \mathbf{T}_{\MA\MB}:= \mathbf{T}_\beta \circ \mathbf{T}_\alpha^{-1}
\end{equation}
meaning that applying this transformation to  object $\MA$ will then transform it back into a goal configuration relative to object $\MB$. Note that the cross-pose function is \textbf{equivariant} in the sense that $f(T_\MA \circ \PA, T_\MB \circ \PB) = T_\MB \circ f(\PA, \PB) \circ T_\MA^{-1}$.

Our method assumes that we receive point cloud observations of the  scene (e.g. from LiDAR/depth cameras/stereo), 
and that the pair of objects being manipulated (e.g. the mug and the rack) have been segmented. For training the cross-pose prediction module, we assume access to a set of demonstrations of pairs of objects in their final goal configuration (e.g. demonstrations in which objects $\MA$ and $\MB$ have  poses $\mathbf{T}_\MA$, $\mathbf{T}_\MB$ such that $\text{RelPlace}(\mathbf{T_\MA}, \mathbf{T_\MB}) = \textbf{\textsc{Success}}$, according to Eqn.~\ref{eq:xp_equivariance}).
\section{A Framework for SE(3) Equivariant Manipulation}
\label{section:method}


\textbf{Method Overview:} 
We now describe our method for precise relative object placement which is provably $SE(3)$-equivariant under transformations to individual objects. At a high level, we: 1) Generate SE(3)-invariant per-point features for objects in the scene; 2) Use cross-attention to augment these features with task-specific information; 3) Apply a learned kernel function, which takes in a pair of points on two different objects and predicts how far apart they should be to satisfy the task; 
4) Use a differentiable closed-form estimator of the least-squares solution to the multilateration problem to estimate the desired task-specific final location of each point; and 5) Compute the least-squares transform between the estimated point locations and the input locations, using differentiable SVD. See Figure \ref{fig:method_overview} for a high-level overview of our approach.





\begin{figure}[t]
    \centering
    \includegraphics[width=\textwidth]{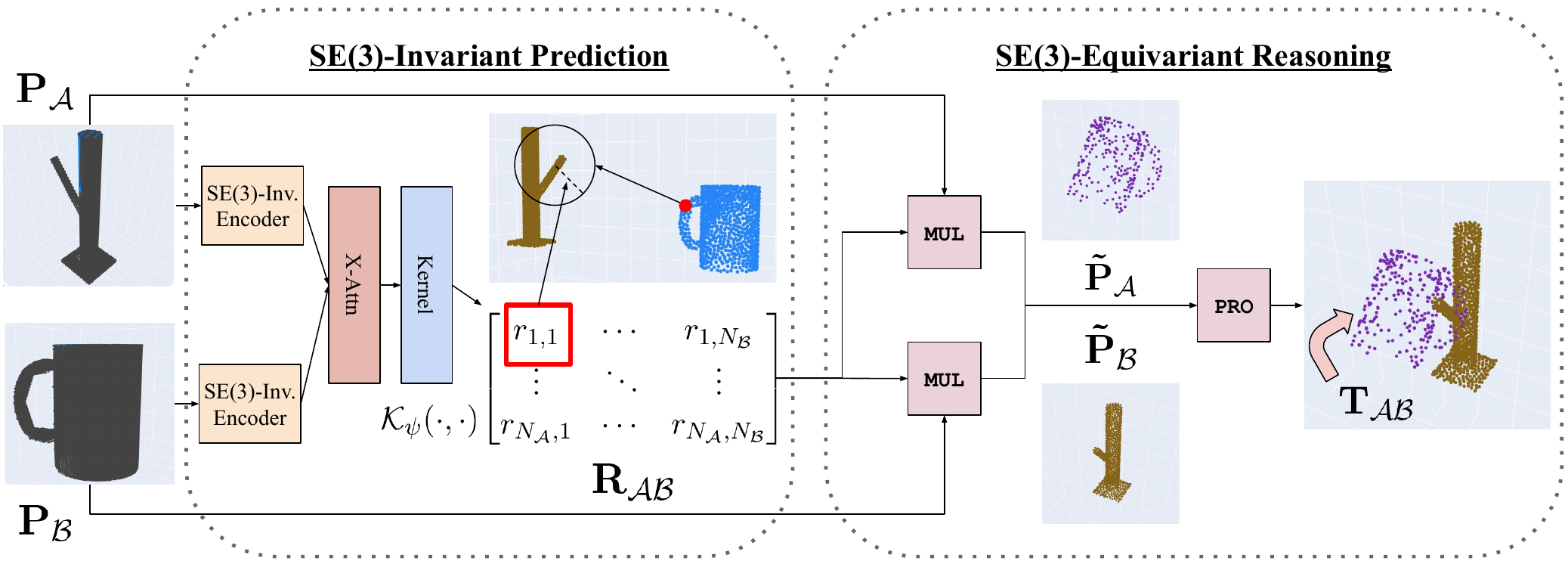}
    \caption{Method overview. First, the point clouds $\PA, \PB$ are each encoded with a dense $SE(3)$-equivariant encoder, after which cross-attention is applied to yield task-specific dense representations. Then, the kernel matrix $\RAB$ is constructed through the learned kernel $\mathcal{K}_\psi$. This matrix is then passed into $\texttt{MUL}$ to infer the desired final point clouds, and then passed into $\texttt{PRO}$ to extract a final transform which moves object $\MA$ into its goal position.}
    \label{fig:method_overview}
\end{figure}


\subsection{\method: An SE(3)-Invariant Representation for Relative Placement}

The key insight of this work for solving relative placement tasks is as follows: 
suppose that we wish to predict the desired goal position of object $\MA$, which should be placed in some location relative to object $\MB$.  One approach is to predict the desired pose of object $\MA$ directly, relative to object $\MB$; previous work \cite{Pan2023-ua} shows that this approach has poor performance.  Another approach used in prior work that performs better \cite{Pan2023-ua} is to predict the desired location of each point $\pa{i}$ in object $\MA$.  Instead, our approach is to predict a set of desired distance relationships between $\pa{i}$ and at least 3 points on object $\MB$. 
Specifically, for each point $\pa{i}$ in object $\MA$, and for some set of points $\{\pb{1}, \ldots, \pb{K}\}$ on object $\MB$, we predict the desired Euclidean distance $r_{ij} = d(\pa{i}, \pb{j})$ between points $\pa{i}$ and $\pb{j}$ in the desired goal configuration. As long as the number of points $K$ used from object $\MB$ is at least 3, we can use multilateration (Section~\ref{section:background}) to estimate the desired location of $\pa{i}$.

Note that these desired distance between point $\pa{i}$ on object $\MA$ and some point $\pb{j}$ on object $\MB$ is invariant to the current poses of objects $\MA$ and $\MB$. Specifically, if object $\MA$ is transformed by $\mathbf{T}_\alpha$ and object $\MB$ is transformed by $\mathbf{T}_\beta$, then the desired distance between these points is constant:
\begin{align}
    r_{ij} = d(\pa{i}, \pb{j}) = d(\mathbf{T}_\alpha \circ \pa{i}, \mathbf{T}_\beta \circ \pb{j})
\end{align}
Further, these distances are scalar values, so they are not defined in any reference frame, which means that they are invariant to shifts in the origin of the coordinate system. We refer to $r_{ij}$ as \textbf{RelDist}. This representation gives a convenient target for neural network prediction - one simply needs to predict a set of pairwise invariant relationships $d(\pa{i}, \pb{j})$ across two objects.


\subsection{Predicting \textbf{\method}}

We now describe a provably SE(3)-invariant architecture for predicting \method:

\textbf{Dense SE(3)-Invariant Features:}
We assume that we are given a point cloud for objects $\MA$ and $\MB$, given by $\PA \in \mathbb{R}^{\NA \times 3}$ and $\PB \in \mathbb{R}^{\NB \times 3}$, respectively.
The first step of our approach is to compute an invariant feature for each point in these point clouds.  In other words, the feature should not change if either object is transformed by an SE(3) transformation. 
Further, we wish each point to have a unique feature.  
Given point clouds $\PA$ and $\PB$, we can use a provably $SE(3)$-Invariant point cloud neural network such as DGCNN \cite{Wang2019-cd} implemented using Vector Neuron layers \cite{Deng2021-ps}. Alternatively, one could train a standard DGCNN to produce $SE(3)$-Invariant representations, although the resulting features would not be guaranteed to be invariant. We consider both approaches in our experiments.

Specifically, we learn functions $f_{\theta_\MA}$ and $f_{\theta_\MB}$ which map the point clouds $\PA$ and $\PB$ to a set of invariant $d$-dimensional per-point features $\PhiA$ and $\PhiB$:
\begin{align}
    \PhiA = f_{\theta_\MA}(\PA) \in \mathbb{R}^{\NA \times d}, \quad \PhiB = f_{\theta_\MB}(\PB) \in \mathbb{R}^{\NB \times d}
    \label{eq:invariant_features}
\end{align}
\textbf{Task-Specific Cross-Attention:}
Although the representations $\PhiA, \PhiB$ are object specific, the relative placement task requires reasoning about  cross-object relationships. For instance, for hanging a mug on a rack, the network needs to reason about the relative location of the mug handle and the peg on the rack. To reason about these cross-object relationships, 
we learn a cross-attention module, using multi-head cross attention \cite{Vaswani2017-ua}, which we sum with the pre-attention embeddings from Eq.~\ref{eq:invariant_features}:
\begin{align}    
\PhiAX, \PhiBX= \text{X-Attn}_\gamma(\PhiA, \PhiB)
\label{eq:cross-attention1}\\
\PhiAH = \PhiA + \PhiAX, \quad \PhiBH = \PhiB + \PhiBX
\label{eq:cross-attention2}
\end{align}

\textbf{A Kernel Function for predicting \method:}
Let $\phiA{i}$ and $\phiB{j}$ represent the learned embeddings for point $\pa{i}$ on object $\MA$ and point $\pb{j}$ on object $\MB$, respectively.
We now wish to define a symmetric function $\mathcal{K}$ which takes a pair of embeddings $\phiA{i}$, $\phiB{j}$ and computes a positive scalar $r_{ij} \in \mathbb{R}^+$ which predicts the desired distance between points $\pa{i}$ and $\pb{j}$ when objects A and B are placed in the  goal configuration:
$r_{ij} = \mathcal{K}(\phiA{i}, \phiB{j})$.
This function $\mathcal{K}$ can be thought of as a type of kernel, for which there are various choices of functions. In this work, we select a simple learned kernel with the following form:
\begin{align}    
\mathcal{K}_\psi(\phiA{i}, \phiB{j}) = \text{softplus}\left(\frac{1}{2} \big(\text{MLP}_\psi(\phiA{i}, \phiB{j}) + \text{MLP}_\psi(\phiB{j}, \phiA{i})\big)\right)
\label{eq:kerneleq}
\end{align}
where $\psi$ are a set of learned parameters. Details can be found in the Appendix~\ref{sec:Network Architectures}.
This allows us to build the kernel matrix $\RAB$:

\begin{equation}
    \RAB = \begin{bmatrix}
        \mathcal{K}_\psi(\phiA{1}, \phiB{1}) & \cdots & \mathcal{K}_\psi(\phiA{1}, \phiB{\NB}) \\
        \vdots & \ddots & \vdots \\
        \mathcal{K}_\psi(\phiA{\NA}, \phiB{1}) & \cdots & \mathcal{K}_\psi(\phiA{\NA}, \phiB{\NB}) \\
    \end{bmatrix} \in (\mathbb{R}^{+})^{(\NA \times \NB)}
    \label{eq:kernel-matrix}
\end{equation}

which can be interpreted exactly as the set of \method values for all pairs of points $\pa{i}$ and $\pb{j}$ between objects $\MA$ and $\MB$. Note again that this prediction is $SE(3)$-Invariant for any individual transformations of objects $\MA$ or $\MB$.


\subsection{From \method to Cross-Pose}
\label{sec:mul-and-svd}

\begin{figure}[ht]
    \centering
    \begin{subfigure}[t]{0.32\textwidth}
        \centering
        \includegraphics[width=\textwidth]{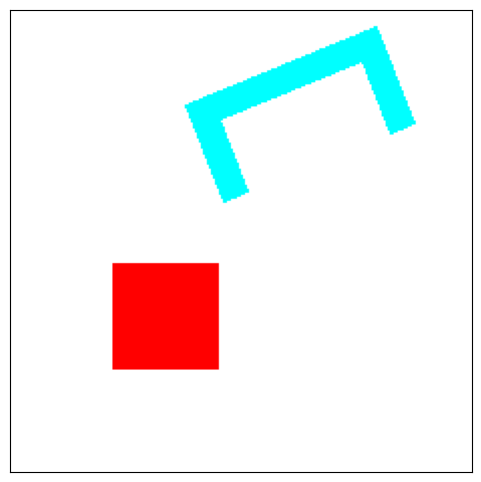}
        \caption{Initial observation.}
        \label{fig:mlat_init}
    \end{subfigure}%
    ~ 
    \begin{subfigure}[t]{0.32\textwidth}
        \centering
        \includegraphics[width=\textwidth]{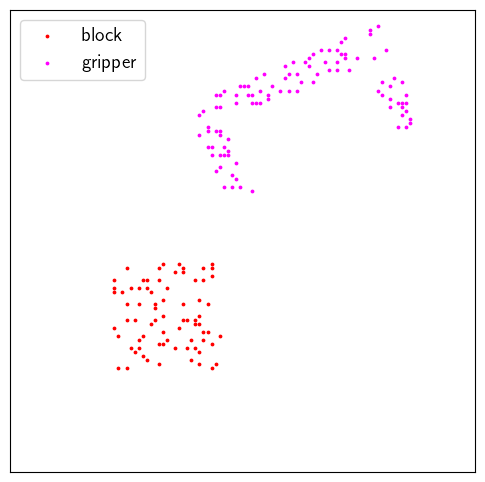}
        \caption{Sample points.}
        \label{fig:mlat_sample}
    \end{subfigure}%
    ~ 
    \begin{subfigure}[t]{0.32\textwidth}
        \centering
        \includegraphics[width=\textwidth]{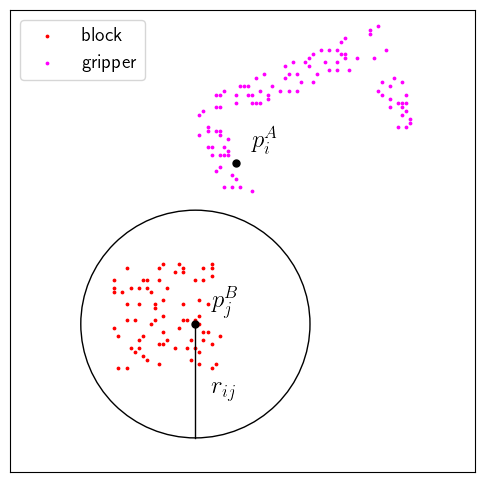}
        \caption{Estimate $r_{ij}$ for point pairs.}
        \label{fig:mlat_single_point}
    \end{subfigure} \\
    \begin{subfigure}[t]{0.32\textwidth}
        \centering
        \includegraphics[width=\textwidth]{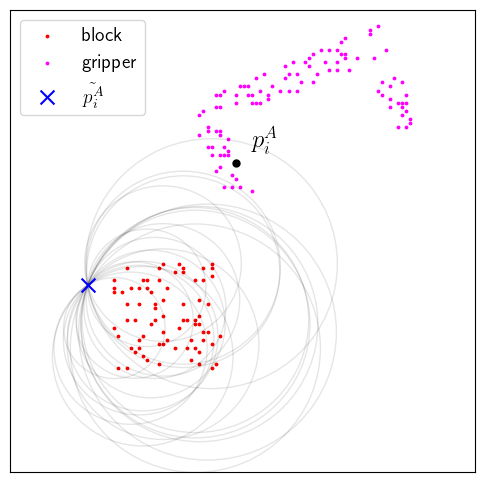}
        \caption{Infer $p'^\MA_{i} = 
        \texttt{MUL}(R_{i\MB}, \PB)$.
        }
        \label{fig:mlat_multi_radius}
    \end{subfigure}%
    ~ 
    \begin{subfigure}[t]{0.32\textwidth}
        \centering
        \includegraphics[width=\textwidth]{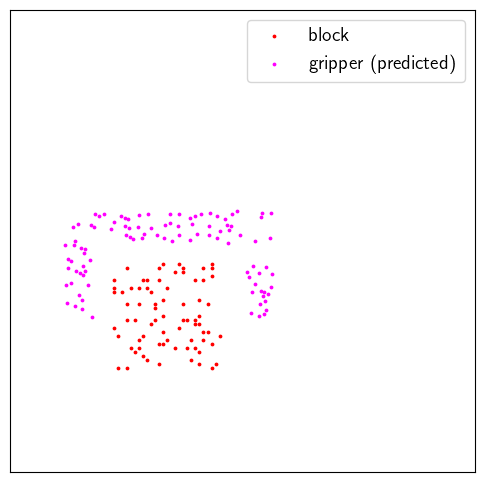}
        \caption{Repeat $\texttt{MUL}(R_{i\MB}, \PB)$ for all $i$.}
        \label{fig:mlat_finished}
    \end{subfigure}%
    ~ 
    \begin{subfigure}[t]{0.32\textwidth}
        \centering
        \includegraphics[width=\textwidth]{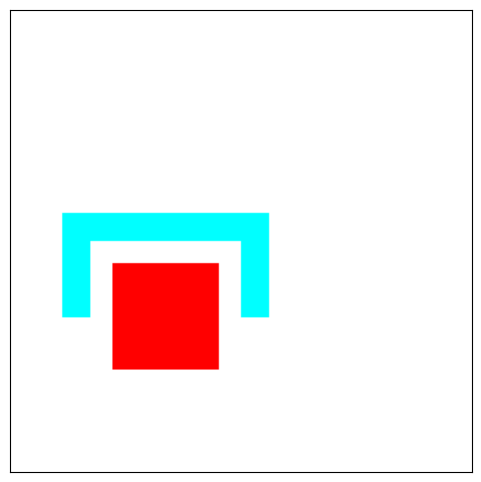}
        \caption{Infer gripper goal pose.}
        \label{fig:mlat_predicted}
    \end{subfigure}%
    \caption{Reasoning with multilateration (a) in a 2D environment with block and gripper. (b) For each sampled point on the gripper, (c) we estimate the desired distances between the gripper point and points on the block.  We then use multilateration (d) to extract a least-squares solution to compute the desired gripper point location. Doing this for every point on the gripper, (e) we can reconstruct the desired position for each gripper point. (f) These corresponding points can be used to infer a rigid transform which brings the gripper to the final goal position.}
    
    \label{fig:multilateration}
\end{figure}

We now describe how we can turn our invariant representation into a Cross-Pose prediction with the desired properties above. A visual walkthrough can be found in Figure \ref{fig:multilateration}.

\textbf{Estimating Corresponding Points with Differentiable Multilateration:}
Given the estimated set of \method values $\RAB$, we would like to estimate the intended position of each point in object $\MA$ with respect to object $\MB$ (or vice versa). The $i$th row $R_{i\MB}$ of $\RAB$  contains a set of scalars indicating the desired distances from points on object $\MB$ to the point $\pa{i}$ on object $\MA$ when placed in the goal configuration. We would like to use these relative distances to estimate the goal position of this point. Since $\RAB$ is a predicted quantity and may have some noise, we wish to find the point $p'^\MA_{i}$ which minimizes the Mean-Squared error with respect to that row:
\begin{align}
p'^\MA_{i} = \argmin_{p'^\MA_{i}}\quad{\sum_{j=1}^{\NB}\Big|\Big| ||\pb{j} - p'^\MA_{i}||^2_2 - r_{ij}^2}\Big|\Big|^2_2
\label{eq:reldistopt}
\end{align}
This describes a nonlinear least-squares optimization problem, a class of problems which has no general closed-form global minimizer. However, it has been shown that there is a differentiable, closed-form global minimizer~\cite{Zhou2009-of} for this specific problem in Equation~\ref{eq:reldistopt}, which we will refer to as the function $\texttt{MUL}$:
$p'^\MA_{i} = \texttt{MUL}(R_{i\MB}, \PB)$.
See Figures \ref{fig:mlat_single_point} and \ref{fig:mlat_multi_radius} for a visualization of this point estimation process, and Appendix~\ref{sec:Multilateration} 
for details. By computing this function on each row of $\RAB$, we can compute the desired goal position for each point on $\MA$, as: 
$\tilde{\mathbf{P}}_\MA = \{\texttt{MUL}(R_{i\MB}, \PB)\}_{i \in \left[\NA\right]}$
(see Figure \ref{fig:mlat_finished}). Importantly, this solution to the multilateration problem is provably SE(3)-equivariant to transformations on object $\MB$, given a fixed $\RAB$ (see Appendix~\ref{sec:Proof of Full-System Equivariance in Placement Tasks}
for proof).


\textbf{Estimating Cross-Pose with SVD:}
Given a set of initial points $P_A$ and their corresponding estimated final task-specific goal positions $\tilde{P}_\MA$, we can set up a classic weighted Procrustes problem to compute  the estimated cross-pose $\TAB$ between objects $\MA$ and $\MB$, as defined in Section~\ref{section:background}. This has a known closed-form solution based on the Singular Value Decomposition (SVD) (see Appendix~\ref{sec:Procrustes})
which can be implemented to be differentiable~\cite{papadopoulo2000estimating,Pan2023-ua}. 
For the relative placement task, once we  estimate $\TAB$,  we  use motion planning to move object $\MA$ by $\TAB$ into the goal pose.
Because $\texttt{MUL}$ and $\texttt{PRO}$ are all differentiable functions, the whole method can be trained end-to-end, using the same loss functions as defined in 
\citet{Pan2023-ua} (see Appendix~\ref{appendix:training} for details). 

\section{Experiments}
\label{section:experiments}


\subsection{RLBench Tasks - Achieving Precise Placement}
\label{sec:rlbench}

\newcommand{\stackwine}[0]{\texttt{stack\_wine}}
\newcommand{\toiletroll}[0]{\texttt{toilet\_roll}}
\newcommand{\phoneonbase}[0]{\texttt{phone\_on\_base}}
\newcommand{\insertpeg}[0]{\texttt{insert\_peg}}
\newcommand{\hanghanger}[0]{\texttt{hang\_hanger}}

\begin{figure}[h]
    \centering
    \includegraphics[width=\textwidth]{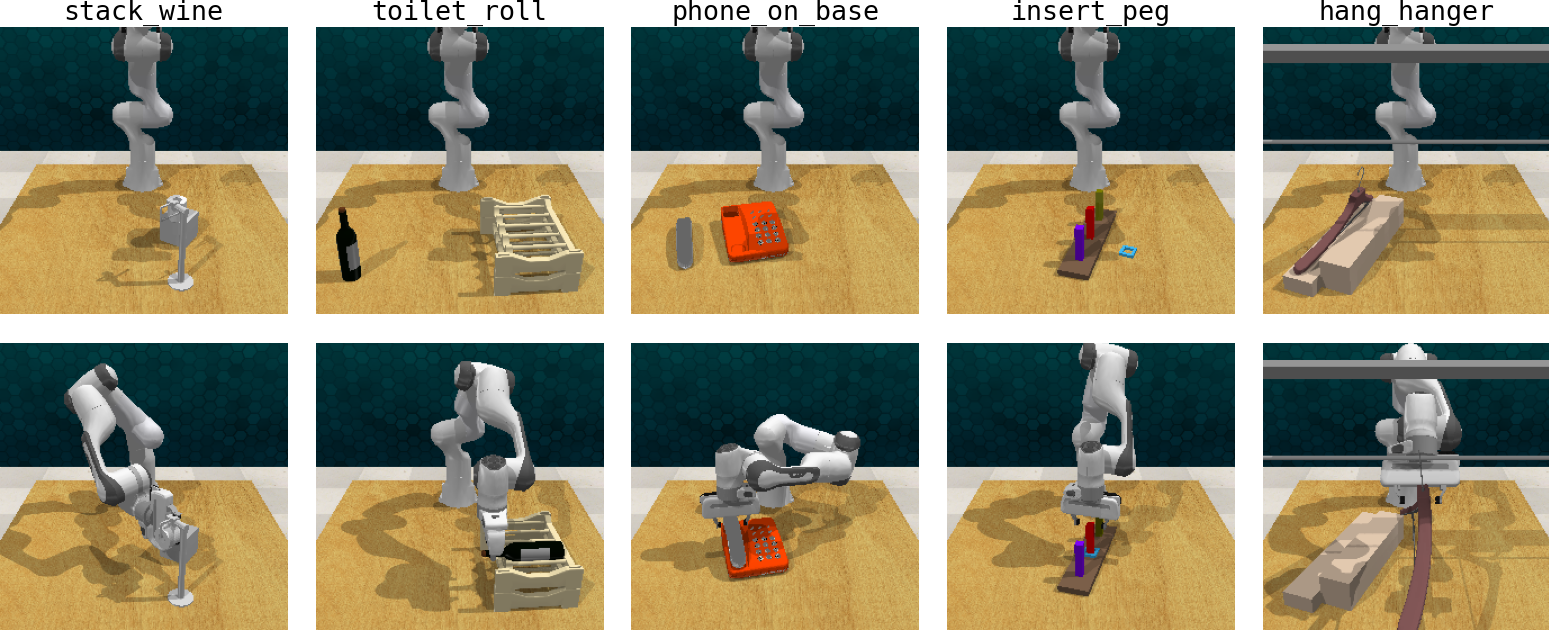}
    \caption{RLBench~(\cite{James2020-mj}) relative placement tasks. Top: the initial state of a demonstration. Bottom: the final state of a demonstration, where a successful placement has been achieved.
    }
    \label{fig:enter-label}
\end{figure}

\textbf{Experiment Setup}: To assess the ability of our method to make precise predictions in relative placement tasks, we select 5 manipulation tasks tasks from the RLBench benchmark~(\cite{James2020-mj} which require varying amounts of precision to successfully accomplish each task. Specifically, we choose the following tasks: \stackwine, \toiletroll, \phoneonbase, \insertpeg, and \hanghanger. Each of these tasks involves placing a singular object in a task-specific configuration relative to some other object.
We collect 10 expert demonstrations for each task, and extract point clouds for initial and final keyframes using ground truth segmentation from multiple simulated RGB-D images. This forms the basis for a per-task relative placement task, where the goal is to predict the desired goal configuration for each object given the initial point cloud. To evaluate the performance of a prediction system, we generate 1000 unseen initial starting positions for the object in the scene, and measure the both the rotation error $\epsilon_\text{r}$ and the centroid translation error $\epsilon_\text{p}$ of the predictions compared to the goal poses reached in expert demonstrations. 
We evaluate both TAX-Pose \cite{Pan2023-ua} and our method on each task. Results are presented in Table \ref{tab:rlbench}.

\renewcommand{\arraystretch}{1.2}

\newcommand{\erot}[0]{$\epsilon_{\text{r}}$ (\degree)}
\newcommand{\epos}[0]{$\epsilon_{\text{p}}$ (m)}
\setlength\tabcolsep{.35em}
\begin{table}[ht]
    \setlength{\thickmuskip}{0mu}
    \centering

\begin{tabular}{|c|cc|cc|cc|cc|cc|}
    \Xhline{3\arrayrulewidth}
    \multicolumn{11}{V{3}cV{3}}{\textbf{RLBench Placement Tasks}} \\
    \Xhline{3\arrayrulewidth}

  & \multicolumn{2}{c|}{\texttt{stack\_wine}} & \multicolumn{2}{c|}{\texttt{toilet\_roll}} & \multicolumn{2}{c|}{\texttt{hang\_hanger}} & \multicolumn{2}{c|}{\texttt{phone\_on\_base}} & \multicolumn{2}{c|}{\texttt{insert\_peg}} \\ \Xhline{3\arrayrulewidth}
 
 Method & \erot & \epos & \erot  & \epos & \erot  & \epos & \erot  & \epos & \erot  & \epos \\ \Xhline{3\arrayrulewidth}
TAX-Pose    & 1.485 & 0.003 & 1.173 & \textbf{0.001} & 5.471 & 0.012 & 4.144 & 0.005 & 7.098 & 0.004 \\ 
Ours        & \textbf{0.764} & \textbf{0.001} & \textbf{1.150} & \textbf{0.001} & \textbf{0.624} & \textbf{0.002} & \textbf{0.804} & \textbf{0.001} & \textbf{1.209} & \textbf{0.003 }\\
\Xhline{3\arrayrulewidth}
\end{tabular}

\caption{\hl{RLBench Placement Tasks, prediction error ($\downarrow$ is better): We measure the precision of our method when predicting the cross-pose which brings each object into the goal position. We report both the angular error }\erot \hl{and the translational error} \epos, compared to the goal poses achieved by expert demonstrations.
    }

    \label{tab:rlbench}
\end{table}
\textbf{Analysis}: For every task in our evaluation, our method achieves highly precise placements, with rotational errors less than 1.25\degree and translational errors less than 3 millimeters (and frequently only 1mm). For each task, this level of precision is well within the tolerances for successfully completing the task. Additionally, we substantially outperform our closest baseline (TAX-Pose) in nearly every metric (which matches our translational precision in only on \toiletroll). In particular, our method is 2-9x more precise in rotation on 4 of the tasks. Because each task includes only a single object which varies only in initial pose, this evaluation can be interpreted roughly as a ceiling for how precisely these relative relationships can be represented. We believe that this is strong evidence that our invariant-equivariant architecture is capable of representing significantly more precise relationships in general than prior work. Furthermore, we show in Appendix \ref{appendix:rlbench} that our method can learn highly precise relationships from only a single demonstration.


\subsection{NDF Tasks - Category-Level Generalization}
\label{sec:sim-results}

\textbf{Experiment Setup}: 
To evaluate the ability of our method to generalize precise placements across a class of objects with reasonable variation, we evaluate our method on the NDF relative placement tasks, as proposed in \citet{Simeonov2022-re}. These tasks have the following structure: an object is positioned in a tabletop scene, along with a robot arm with parallel-jaw gripper. The task is to predict two poses for the robot to execute: first, a \textbf{Grasp} pose which will lead to a stable grasp on the object when the gripper is actuated; second, a \textbf{Place} pose, where the grasped object should be placed to accomplish the task. At training time, the agent is given 10 demonstrations of each stage of the task (grasp and place), with a variety of objects from the same object category. The agent must then learn a relative pose prediction function for each scenario which can generalize to novel instances of objects. There are two variants of each task evaluation: when the object is initialized to be resting upright on the table, bottom-side down (``Upright''), and when the object is initialized at an arbitrary orientation above the table for the robot to grasp (``Arbitrary''). At test time, we randomly sample 100 initial configurations from each distribution and report average metrics across all trials. We report success rates at various thresholds of penetration between the placed object and the scene as a proxy for precision. See Appendix \ref{appendix:additional} for more details and motivation for this metric.

We compare our method to the following baselines: Dense Object Nets~(\cite{Florence2018-te}), 
Neural Descriptor Fields~(\cite{Simeonov2022-re}), and TAX-Pose~(\cite{Pan2023-ua}) (see Appendix  \ref{appendix:baselines}
for details).

\newcommand{\eplt}[1]{$<#1$cm}
\setlength{\tabcolsep}{5pt}
\renewcommand{\arraystretch}{1.4}


\begin{table}[t]
    \setlength{\thickmuskip}{0mu}
    \centering
    \begin{tabular}{V{3}c|c|ccc|cccV{3}}
    \Xhline{3\arrayrulewidth}
    \multicolumn{8}{V{3}cV{3}}{\textbf{Mug Hanging} (Upright/Arbitrary)} \\ \Xhline{3\arrayrulewidth}

    & \textbf{Grasp} & \multicolumn{3}{c|}{\textbf{Place}} & \multicolumn{3}{cV{3}}{\textbf{Overall}} \\ \hline
    Method & & \eplt{1} & \eplt{3} & \textit{$\infty$}  & \eplt{1} & \eplt{3}  & \textit{$\infty$} \\ \hline
DON\footnote{\cite{Florence2018-te}}                & 0.91/0.35       & - & -           & 0.50/0.45  & - & -                         & 0.45/0.17 \\
NDF \footnote{\cite{Simeonov2022-re}}               & 0.96/0.78       & - & -           & 0.92/0.75  & - & -                         & 0.88/\textbf{0.58} \\
    TAX-Pose \footnote{\cite{Pan2023-ua}}           & \textbf{1.00}/\textbf{0.50} & 0.15/0.32 & 0.70/\textbf{0.75} & 0.93/0.85 & 0.15/0.14 & 0.70/\textbf{0.40} & 0.93/0.44\\ \hline
    Ours                                            & \textbf{1.00}/0.43 & \textbf{0.35}/\textbf{0.36} & \textbf{0.78}/0.66 & \textbf{0.99}/\textbf{0.88} & \textbf{0.35}/\textbf{0.19} & \textbf{0.78}/0.32 & \textbf{0.99}/0.41 \\

    \Xhline{3\arrayrulewidth}
    \end{tabular} \\ 
    \vspace{1em}
    
    \caption{NDF Mug Hanging, Success Rate ($\uparrow$ is better): The success rate of each method when running the relative placement tasks with the objects starting either upright on the table (left number) or in arbitrary poses above the table (right number). Each column group shows the success rates for the \textbf{Grasp} phase and \textbf{Place} phase, as well as \textbf{Overall} performance (when both grasp and place are successful in a trial). Additionally, for \textbf{Place} and \textbf{Overall} we report success when maximum penetration is thresholded at the given distance (1cm, 3cm, $\infty$).
    }
    \label{tab:ndf_results}
    \vspace{-0.5cm}
\end{table}

\textbf{Analysis}: Our numerical results are presented in Table \ref{tab:ndf_results}. Our method achieves substantially more precise placements in both the upright and arbitrary mug hanging tasks: while other methods perform well when geometric feasibility is not considered in the evaluation (infinity column), our method yields substantially more feasible predictions in both settings, when penalizing for collisions (penetration thresholds of 1 or 3 cm). Results for the remaining NDF tasks can be found in Appendix \ref{appendix:additional}; however, our results on the Bottle and Bowl tasks are inconclusive, as implementations of the symmetry-breaking techniques proposed by \cite{Pan2023-ua} - which are crucial to good performance for our closest baseline, TAX-Pose - were not released, and we were not able to reproduce the performance reported by \cite{Pan2023-ua}.
We also performed multiple ablations to understand the effect of different design decisions of our method; see Appendix~\ref{appendix:ablations} for details.




\subsection{Real-World Sensor Experiments}
\label{sec:real-world}



\begin{wrapfigure}{r}{0.45\textwidth}
\begin{minipage}[t]{0.45\textwidth}
    \vspace{-.5cm}
        \centering
        \includegraphics[width=\textwidth]{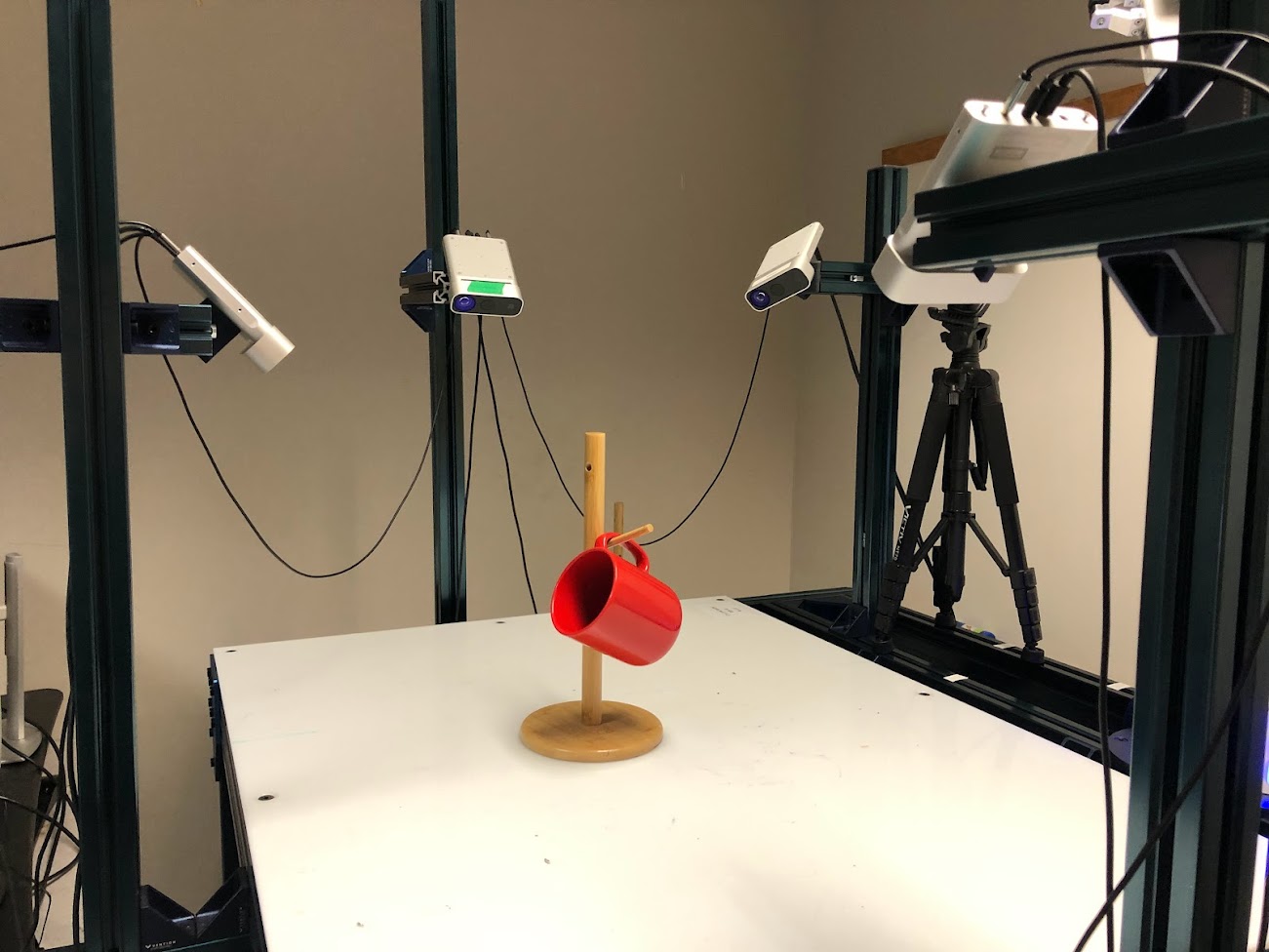} 
        \caption{A real-world demonstration of mug-hanging.}
        \label{fig:figure}
    \end{minipage}



    \vspace{-1cm}
\end{wrapfigure}

\textbf{Experiment Description}: To demonstrate the ability of our method to generalize to real-world \hl{sensors}, we design an experiment that closely follows the real-world experiments proposed in \cite{Pan2023-ua}. Specifically, we choose the real-world mug-hanging experiment from this work. The authors from \cite{Pan2023-ua} have provided us with a dataset real-world mug-hanging demonstrations, collected in the setup displayed in Figure \ref{fig:real-examples}. We then train our method in the same way as we did simulated mugs. \hl{See Figure }\ref{fig:mugs} and \ref{fig:real_preds} and Table \ref{tab:real-metrics}\hl{ in the Appendix for qualitative and quantitative evaluations of this offline dataset. We find that our method produces predictions consistent with successful mug-hanging.}




\section{Conclusion \& Limitations}
\label{section:conclusion}

In this work, we present a provably $SE(3)$-Equivariant system for predicting task-specific object poses for relative placement tasks. We propose a novel cross-object representation, RelDist, which is an $SE(3)$-Invariant geometric representation for cross-object relationships.  We also propose a pair of $SE(3)$-Equivariant geometric reasoning layers that use multilateration and SVD, respectively, to extract a relative pose prediction. Finally, we demonstrate that this representation provides superior performance in high-precision tasks in simulated environments, and is applicable to point cloud data collected in real-world manipulation demonstrations.

This method has several limitations. First, it cannot handle symmetric objects or multimodal placement tasks without an external mechanism for symmetry-breaking, as only a single pose is predicted even when multiple poses may be valid. This might be addressed by exploring generative versions of this model. Further, our method assumes that we have segmented the two task-relevant objects (e.g. the mug and the rack). Still, we hope that our method provides a foundation for future work on SE(3)-Equivariant learning for relative placement tasks.

\clearpage


\subsubsection*{Acknowledgments}
This material is based upon work supported by the National Science Foundation under Grant No. DGE-1745016 (NSF GRFP), as well as by NIST under Grant No. 70NANB23H178.

\bibliography{references}
\bibliographystyle{iclr2024_conference}

\clearpage

\newpage
\appendix
\addcontentsline{toc}{section}{Appendix} 
\part{Appendix} 

\parttoc 




\section{SE(3)-Equivariant Geometric Reasoning}
\label{appendix:reasoning}

In the following sections, we provide the closed-form solutions used for the multilateration and SVD layers, respectively.

\subsection{Multilateration}
\label{sec:Multilateration}

In \citet{Zhou2009-of}, the problem of multilateration (i.e. estimating the position of a receiver $p$ given a set of distances $r_k$ to $k$ beacons with known positions $q_k$) is considered:

\begin{equation}
    \min_{\mathbf{p}^*}\quad \sum_{i=1}^{N} \left|\left| ||\mathbf{p}_i - \mathbf{p}^2||^2_2 - r_i^2 \right|\right|^2_2
\end{equation}

\citet{Zhou2009-of} derives two closed-form, differentiable minimizers to this problem, one of which is for the overspecified case (more than 3 non-co-planar beacons) and for the degenerate case where there are many co-planar beacons. In our approach, we utilize the first solution, as it is simpler to implement and the point distributions for objects are almost never coplanar in our experiments. See Algorithm \ref{alg:mul} for pseudocode of this approach. This function is called as $\texttt{MUL}(\mathbf{R},\mathbf{P})$.

\renewcommand{\algorithmicrequire}{\textbf{Input:}}
\renewcommand{\algorithmicensure}{\textbf{Output:}}

\begin{algorithm}[H]
\caption{Multilateration (\texttt{MUL})}\label{alg:mul}
\begin{algorithmic}
\Require $\mathbf{R} \in \{r_{i} | r_i \in \mathbb{R}^+\}_{[N]}, \mathbf{P} \in \{\mathbf{p}_i | \mathbf{p}_i \in \mathbb{R}^{3}\}_{[N]} $ 
\State $\mathbf{a} \gets \frac{1}{N} \sum_{i=1}^{N}(\mathbf{p}_i \mathbf{p}_i^T \mathbf{p}_i - r_i^2 \mathbf{p}_i)$
\State $\mathbf{B} \gets \frac{1}{N} \sum_{i=1}^{N} \left[ -2 \mathbf{p}_i \mathbf{p}_i^T + (\mathbf{p}_i^T\mathbf{p}_i) \mathbf{I}  + r_i^2 \mathbf{I}\right]$
\State $\mathbf{c} \gets \frac{1}{N}\sum_{i=1}^{N} \mathbf{p}_i$
\State $\mathbf{f} = \mathbf{a} + \mathbf{Bc} + 2\mathbf{cc}^T\mathbf{c}$
\State $\mathbf{H} = -\frac{2}{N}\sum_{i=1}^N \mathbf{p}_i \mathbf{p}_i^T +2 
\textbf{cc}^T$
\State $\mathbf{q} = -\mathbf{H}^{-1} \mathbf{f}$
\State $\mathbf{p}^* = \mathbf{q} + \mathbf{c}$

\Ensure $\mathbf{p}^*$

\end{algorithmic}
\end{algorithm}

As each operation in the algorithm is differentiable with respect to $\mathbf{R}$, the entire function \texttt{MUL} is differentiable.

\subsection{Procrustes}
\label{sec:Procrustes}

The problem of finding the (weighted) least-squares minimizing rigid transformation to align two sets of points is the well-known Procrustes problem \cite{Sorkine-Hornung2017-lv}:

\begin{equation}
\min_{R, t}\quad \sum_{i=1}^{N} w_i \left|\left| R \mathbf{p}_i + t - \mathbf{q}_i \right|\right|^2_2,\quad\quad \text{s.t.} \quad R \in SO(3)
\end{equation}

This problem has a well-known closed-form minimizer, given by the pseudocode in Algorithm~\ref{alg:svd}. Because this algorithm solves the Procrustes problem, we refer to it as $\texttt{PRO}$ for short. This function is called as $\texttt{PRO}(\mathbf{P}, \mathbf{Q}, \textbf{w})$ where \textbf{w} are an optional set of weights and may be omitted if the weights are all equal to 1.

\begin{algorithm}[H]
\caption{Least-squares solution to the Procrustes problem (\texttt{PRO})}\label{alg:svd}
\begin{algorithmic}
\Require $\mathbf{P} \in \{\mathbf{p}_i | \mathbf{p}_i \in \mathbb{R}^{3}\}_{[N]}, \mathbf{Q} \in \{\mathbf{q}_i | \mathbf{q}_i \in \mathbb{R}^{3}\}_{[N]}, \textbf{w} \in \{w_i | w_i \in \mathbb{R}^+\}_{[N]}$ 

\State $\bar{\mathbf{p}} \gets \frac{\sum_{i=1}^{N} w_i \mathbf{p_i}}{\sum_{i=1}^{N} w_i} $
\State $\bar{\mathbf{q}} \gets \frac{\sum_{i=1}^{N} w_i \mathbf{q_i}}{\sum_{i=1}^{N} w_i} $
\State $\mathbf{X} \gets \{ \mathbf{p}_i - \bar{\mathbf{p}} \}_{[N]}$
\State $\mathbf{Y} \gets \{ \mathbf{q}_i - \bar{\mathbf{q}} \}_{[N]}$
\State $\mathbf{W} \gets \text{diag}(\mathbf{w})$
\State $\mathbf{S} \gets \mathbf{X W Y}^T$
\State $(\mathbf{U}, \mathbf{\Sigma}, \mathbf{V}^T) \gets \text{SVD}(\mathbf{S})$ \Comment{Singular-Value Decomposition}
\State $\mathbf{D} \gets \textbf{I}$
\State $\mathbf{D}[N, N] \gets \text{det}(\mathbf{V}\mathbf{U}^T)$
\State $R \gets \mathbf{V} \mathbf{D} \mathbf{U}^T$
\State $t \gets \bar{\mathbf{q}} - R \bar{\mathbf{p}}$

\Ensure $R, t$

\end{algorithmic}
\end{algorithm}

All operations in Algorithm \ref{alg:svd} are differentiable (including SVD, if a differentiable implementation is used~\cite{papadopoulo2000estimating,Pan2023-ua}); therefore $\texttt{PRO}$ is also differentiable.


\section{Proof of Full-System Equivariance in Placement Tasks}
\label{sec:Proof of Full-System Equivariance in Placement Tasks}

We now offer a proof that in relative placement tasks for objects $\MA$ and $\MB$, our entire proposed system is equivariant under $SE(3)$ transforms to either $\MA$ or $\MB$. As a reminder, $SE(3)$ equivariance (Eq. \ref{eq:equivariance}) and invariance (Eq. \ref{eq:invariance}), under transform $T \in SE(3)$, are defined as follows:

\noindent\begin{minipage}{.5\linewidth}
\begin{equation}
\label{eq:equivariance2}
f(x) = y \implies f(Tx) = Ty
\end{equation}
\end{minipage}%
\begin{minipage}{.5\linewidth}
\begin{equation}
\label{eq:invariance2}
f(x) = y \implies f(Tx) = y
\end{equation}
\end{minipage}

Let us now define what it means for the cross-pose function $\TAB = f(\PA, \PB)$ to be equivariant.  Recall that we are focusing on relative placement tasks, in which the goal is to move objects $\MA$ and $\MB$ into a desired relative transformation (defined formally in Section~\ref{section:background}).
The cross-pose $\TAB$ defines how object $\MA$ needs to be transformed to move into the goal pose relative to object $\MB$. For example, suppose $\mathbf{T}_{\MA}^*$ and $\mathbf{T}_{\MB}^*$ 
are $SE(3)$ poses for objects $\MA$ and $\MB$ respectively such that the placement task is complete. Now suppose that  objects $\MA$ and $\MB$ 
have been rigidly transformed by $\mathbf{T}_\alpha$ and $\mathbf{T}_\beta$, respectively. Then the cross-pose is given by 
\begin{align}    
f(\mathbf{T}_\alpha\circ  \mathbf{T}_{\MA}^*,  \mathbf{T}_\beta \circ \mathbf{T}_{\MB}^*) = 
\mathbf{T}_{\MA\MB}:= \mathbf{T}_\beta \circ \mathbf{T}_\alpha^{-1}
\end{align}
which is the transform one would need to apply to object $\MA$ to move it back into the goal pose, which is defined relative to object $\MB$. If we futher transform object $\MA$ by $T_\MA$ and object $\MB$ by $T_\MB$, then the cross-pose becomes 
\begin{align}
    f(T_\MA \circ \mathbf{T}_\alpha\circ  \mathbf{T}_{\MA}^*,  
    T_\MB \circ
    \mathbf{T}_\beta \circ \mathbf{T}_{\MB}^*) &=
    (T_\MB \circ \mathbf{T}_\beta) \circ (T_\MA \circ \mathbf{T}_\alpha)^{-1} \\
    &= T_\MB \circ (\mathbf{T}_\beta \circ \mathbf{T}_\alpha^{-1}) \circ T_\MA^{-1} \\
    &= T_\MB \circ \TAB \circ T_\MA^{-1}
\end{align}
Thus, we will define a cross-pose function $f$ as equivariant if it satisfies
\begin{align}
f(T_\MA \circ \PA, T_\MB \circ \PB) = T_\MB \circ f(\PA, \PB) \circ T_\MA^{-1}   
\label{eq:cross-pose-equivariant-def}
\end{align} 
We can now prove our theorem:

\newtheorem{theorem}{Theorem}


\begin{theorem}
Let \(f\) be the method defined in Section~\ref{section:method}, given by 
\begin{align}    
\TAB = f(\PA, \PB) = \texttt{PRO}(\PA, \texttt{MUL}(\RAB, \PB))
\label{eq:definition-of-f}
\end{align}
in which $\RAB$ is computed from $\PA$ and $\PB$ using Equations~\ref{eq:invariant_features}, \ref{eq:cross-attention1}, \ref{eq:cross-attention2}, \ref{eq:kerneleq}, \ref{eq:kernel-matrix}; \texttt{MUL} is described in  Sec.~\ref{sec:mul-and-svd} and defined formally in Appendix~\ref{sec:Multilateration}, and \texttt{PRO} is described in Sec.~\ref{sec:mul-and-svd} and defined formally in Appendix~\ref{sec:Procrustes}.
Then \(f\) is equivariant under $SE(3)$ transformations to either of its inputs $\PA$ or $\PB$.
\end{theorem}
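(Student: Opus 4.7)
The plan is to decompose $f$ along its pipeline and prove a targeted covariance property at each of three stages, then chain them. Concretely, I will show: (i) the kernel matrix $\RAB$ is \textit{invariant} under any $T_\MA \in SE(3)$ applied to $\PA$ and any $T_\MB \in SE(3)$ applied to $\PB$; (ii) for a fixed row $R_{i\MB}$, the multilateration layer satisfies $\texttt{MUL}(R_{i\MB}, T_\MB \circ \PB) = T_\MB \circ \texttt{MUL}(R_{i\MB}, \PB)$; and (iii) the Procrustes layer satisfies $\texttt{PRO}(T_\MA \circ \PA,\, T_\MB \circ \tilde{\mathbf{P}}_\MA) = T_\MB \circ \texttt{PRO}(\PA, \tilde{\mathbf{P}}_\MA) \circ T_\MA^{-1}$. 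Stitching these together through the composition in Eq.~\ref{eq:definition-of-f} will yield exactly Eq.~\ref{eq:cross-pose-equivariant-def}.

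For (i), I would argue that the per-object encoders $f_{\theta_\MA}, f_{\theta_\MB}$ are built from Vector-Neuron DGCNN layers and are therefore provably SE(3)-invariant on each of their inputs, so $\PhiA$ and $\PhiB$ are invariant under $T_\MA$ and $T_\MB$ respectively. Cross-attention (Eqs.~\ref{eq:cross-attention1}--\ref{eq:cross-attention2}) acts only on these invariant features and thus produces invariant $\PhiAH, \PhiBH$. The kernel $\mathcal{K}_\psi$ in Eq.~\ref{eq:kerneleq} reads only these invariant features, so each entry of $\RAB$ -- and hence the entire matrix -- is invariant under independent SE(3) transformations of $\PA$ or $\PB$.

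For (ii) and (iii), rather than verify each line of Algorithms~\ref{alg:mul} and~\ref{alg:svd}, I would invoke a uniqueness-of-the-minimizer argument. The multilateration objective in Eq.~\ref{eq:reldistopt} depends on the beacon points and the candidate point only through the Euclidean distances $\|\pb{j} - p'^\MA_i\|$, which are invariant under the joint rigid action $(\pb{j}, p'^\MA_i) \mapsto (T_\MB \circ \pb{j}, T_\MB \circ p'^\MA_i)$. Hence the objective value is preserved under this action, and the minimizer of the transformed problem is $T_\MB$ applied to the minimizer of the original problem; since $\texttt{MUL}$ returns that closed-form global minimizer, (ii) follows. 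An analogous argument handles (iii): the weighted sum of squared residuals $\|R\mathbf{p}_i + t - \mathbf{q}_i\|^2$ is invariant under $(\mathbf{p}_i, \mathbf{q}_i, (R,t)) \mapsto (T_\MA \circ \mathbf{p}_i,\, T_\MB \circ \mathbf{q}_i,\, T_\MB \circ (R,t) \circ T_\MA^{-1})$, so the unique minimizing transform conjugates as claimed.

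Chaining the three facts, I start from $f(T_\MA \circ \PA, T_\MB \circ \PB)$. Step (i) makes $\RAB$ coincide with its value on the untransformed inputs, so every row $R_{i\MB}$ is unchanged. Step (ii), applied row-by-row, sends the predicted target cloud to $T_\MB \circ \tilde{\mathbf{P}}_\MA$. Step (iii), applied with source $T_\MA \circ \PA$ and target $T_\MB \circ \tilde{\mathbf{P}}_\MA$, yields $T_\MB \circ \TAB \circ T_\MA^{-1}$, matching Eq.~\ref{eq:cross-pose-equivariant-def}. The main obstacle is step (ii): Algorithm~\ref{alg:mul} involves a centering step, outer products, and a matrix inverse whose joint behavior under translation is not manifestly equivariant. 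Going through the algorithm line-by-line is tedious and error-prone; the uniqueness-of-minimizer route is the cleanest fix, since once the objective is shown to depend only on rigid-invariant quantities, any deterministic algorithm returning its unique global minimizer is automatically SE(3)-equivariant in the beacon positions, regardless of its algebraic form.
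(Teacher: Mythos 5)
Your decomposition (invariance of $\RAB$, equivariance of \texttt{MUL} in $\PB$, left/right equivariance of \texttt{PRO}, then chain) is exactly the decomposition the paper uses, and your chaining calculation matches the paper's final four-line derivation. Where you differ is in how steps (ii) and (iii) are certified. The paper verifies \texttt{PRO} by tracing Algorithm~\ref{alg:svd} line by line: after the centering step, the point sets transform purely by the rotation components $R_A$, $R_B$, which pushes through SVD to give $R \mapsto R_B R R_A^{-1}$; for \texttt{MUL} the paper's justification is brief (essentially ``the operations are equivariant''). Your uniqueness-of-minimizer argument is a genuinely different route that buys generality and avoids the tedious line-by-line algebra you correctly identify as error-prone (e.g.\ the cubic terms $\mathbf{p}_i\mathbf{p}_i^T\mathbf{p}_i$ and the translation-handling in \texttt{MUL} are not ``linear'' as the paper loosely states, though they do transform correctly). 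The trade-off is that your argument leans on two extra premises the paper sidesteps by working with the algorithm directly: first, that the objective in Eq.~\ref{eq:reldistopt} (or the relaxed quadratic it is reduced to in \citet{Zhou2009-of}) has a \emph{unique} global minimizer, which requires the non-coplanar beacon condition the paper mentions; and second, that the Procrustes minimizer is unique, which fails in degenerate configurations (e.g.\ repeated singular values of the cross-covariance). A single sentence restricting to the generic non-degenerate case, or noting that the algorithm's deterministic tie-breaking (the $\mathbf{D}$ matrix) transforms covariantly, would close that small gap; otherwise your proof is correct and arguably cleaner than the paper's.
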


\begin{proof}

Assuming:
\begin{itemize}
    \item $f_{\theta_\MA}, f_{\theta_\MB}$ (Eqn.~\ref{eq:invariant_features}) are $SE(3)$-invariant
    \item $\texttt{MUL}(\RAB, \PB)$ is $SE(3)$-equivariant with respect to $\PB$
    \item $\texttt{PRO}(\PA, \tilde{\mathbf{P}}_\MA)$ is left- and right- $SE(3)$-equivariant with respect to its inputs $\PA, \tilde{\mathbf{P}}_\MA$, respectively, in the sense defined by Equation~\ref{eq:pro-equivariant-def}.
\end{itemize}
If $f_{\theta_\MA}, f_{\theta_\MB} $ are $SE(3)$-invariant, then the features $\PhiA$ and $\PhiB$  output by these functions do not change with any transformations of the input; we thus refer to $\PhiA$ and $\PhiB$ as ``invariant features."  Thus, $\RAB$ (Eqn.~\ref{eq:kernel-matrix}) must also be an $SE(3)$-invariant feature, because it is a function exclusively of invariant features $\PhiA$ and $\PhiB$. 





Therefore:
\begin{align*}
        f(T_\MA \circ \PA, T_\MB \circ \PB) 
        & = \texttt{PRO}\left(T_\MA \circ \PA,   \texttt{MUL}(\RAB, T_\MB \circ \PB)\right) \\
        & = \texttt{PRO}\left(T_\MA \circ \PA,  T_\MB \circ \texttt{MUL}(\RAB, \PB)\right) \\
        & = T_\MB \circ \texttt{PRO}(\PA, \texttt{MUL}(\RAB, \PB)) \circ T_\MA^{-1} \\
        & = T_\MB \circ f(\PA, \PB) \circ T_\MA^{-1}
\end{align*}
where the first line follows from the definition of $f$ in Eqn.~\ref{eq:definition-of-f} and because $\RAB$ is an invariant feature as explained above; the second line follows from the second assumption; the third line follows from the third assumption; and the fourth line is just the definition of $f$ in Eqn.~\ref{eq:definition-of-f}. 
\end{proof}


As noted in the main text, to make
$f_{\theta_\MA}, f_{\theta_\MB}$ to be $SE(3)$-invariant, we can either use Vector Neuron layers \cite{Deng2021-ps} or we can train a standard network to produce $SE(3)$-Invariant representations, although the resulting features would not be guaranteed to be invariant. We consider both approaches in our experiments.

The function \texttt{MUL} is discussed in Sec.~\ref{sec:mul-and-svd} and 
defined formally in Appendix~\ref{sec:Multilateration}.  As can be seen, this function consists of linear operations and thus is trivially $SE(3)$-equivariant with respect to its second input.

The third assumption follows by the linear algebra properties of Singular Value Decomposition; following the steps of Algorithm~\ref{alg:svd}, if we compute $\texttt{PRO}(T_\MA \circ \mathbf{P},
T_\MB \circ \mathbf{Q})$, then $\mathbf{X}$, $\mathbf{Y}$ will be transformed to $R_A \cdot \mathbf{X}$, $R_B \cdot \mathbf{Y}$ respectively, where $R_A$, $R_B$  are the rotation components of $T_\MA$, $T_\MB$, since \texttt{PRO} first subtracts off the mean of $\mathbf{P}$ and $\mathbf{Q}$.
Since SVD computes a set of matrices $(\mathbf{U}, \mathbf{\Sigma}, \mathbf{V}^T)$ where $\mathbf{U}$ and $\mathbf{V}$ are rotation matrices, then $\mathbf{U}$ and $\mathbf{V}$ will then be transformed to $R_A \cdot \mathbf{U}$ and $R_B \cdot 
 \mathbf{V}$.  We then compute the rotation matrix $R$ that is output by the algorithm as $\mathbf{V} \mathbf{D} \mathbf{U}^T$, so the output is then transformed to $R_B \cdot R \cdot R_A^T$, which is equivalent to $R_B \cdot R \cdot R_A^{-1}$. The translation component is similarly transformed, which leads us to the property that 
 \begin{align}     
\texttt{PRO}(T_\MA \circ \mathbf{P},T_\MB \circ \mathbf{Q}) = T_\MB \circ\texttt{PRO}(\mathbf{P}, \mathbf{Q}) \circ T_\MA^{-1}
\label{eq:pro-equivariant-def}
 \end{align}

	



\section{Training Details}
\label{appendix:training}

We present various implementation details for our method. 

\subsection{Network Architectures}
\label{sec:Network Architectures}

\textbf{Point-Cloud Encoders}: We use standard segmentation-style DGCNN \cite{Wang2019-jx} and VN-DGCNN \cite{Deng2021-ps} (DGCNN with Vector Neuron layers) architectures for our encoders. Pytorch implementations with identical network structure can be found (DGCNN) \href{https://github.com/WangYueFt/dcp/blob/bcd7deff0020de4be4db62cb953980d7b4f3a28c/model.py#L277}{here} and (VN-DGCNN) \href{https://github.com/FlyingGiraffe/vnn/blob/f3197e210022b5f0015e0da6456adf66bd0cd73e/models/vn_dgcnn_partseg.py#L8}{here}.

\textbf{Cross-Attention Layer}: We implement a single multi-head attention block with 4 heads, with a similar network structure as implemented in Pytorch \href{https://github.com/WangYueFt/dcp/blob/bcd7deff0020de4be4db62cb953980d7b4f3a28c/model.py#L348}{here}. 

\textbf{Kernel Network}: Our implementation of the kernel network is a small MLP with 2 hidden layers of size $[300, 100]$, ReLU activation, and BatchNorm.

\subsection{Sampling in Multilateration}

In Equation \ref{eq:kernel-matrix}, we described $\RAB$ as an $\NA \times \NB$ matrix. In practice, $\NA$ and $\NB$ can be quite large ($>1000$), and each pair must be passed into an MLP to evaluate the kernel. For sufficiently large point clouds, this leads to an explosion in GPU memory, where even a single training example will not fit in GPU memory\footnote{Our implementation may not be perfectly efficient with its GPU utilization, but the $N^2$ relationship is fundamentally a bottleneck for this method.}. To address this issue, we instead sample $K$ points from each point cloud $\PA$ and $\PB$ to construct a $K \times K$ kernel matrix. In this work, we sample uniformly at random. Importantly, this sampling happens \textbf{only} during the construction of the kernel matrix - everything upstream (encoders and cross-attention) are computed on the full-resolution point clouds. In our results, we report a model where $K=256$, and we show an ablation in Appendix \ref{appendix:ablations} an example of performance where $K=100$.

\subsection{Pretraining}

Following \citet{Pan2023-ua}, we pretrain our encoders (both DGCNN and VN-DGCNN) to produce $SE(3)$-invariant encodings using contrastive learning. This effectively learns to map each unique region of a mug, for instance, to a unique representation space (similar to Dense Object Nets \cite{Florence2018-te} representations). The pretraining involves sampling from $SE(3)$, yielding an $SE(3)$-invariant DGCNN encoder. However, our VN-DGCNN is already $SE(3)$-invariant, so this pretraining step has effect of producing contrastive embeddings which improve performance and convergence speed when training our method using VN-DGCNN encoders. See Appendix \ref{appendix:ablations} for a comparison of how pretraining affects model performance.

\subsection{Supervision}

We use identical supervision as proposed in \citet{Pan2023-ua}, combining the Point Displacement Loss, Direct Correspondence Loss, and Correspondence Consistency Loss. See \citet{Pan2023-ua} for details. Examples of training mugs can be found in Figure \ref{fig:examples-of-mugs}

\begin{figure}
    \centering
    \includegraphics[width=\textwidth]{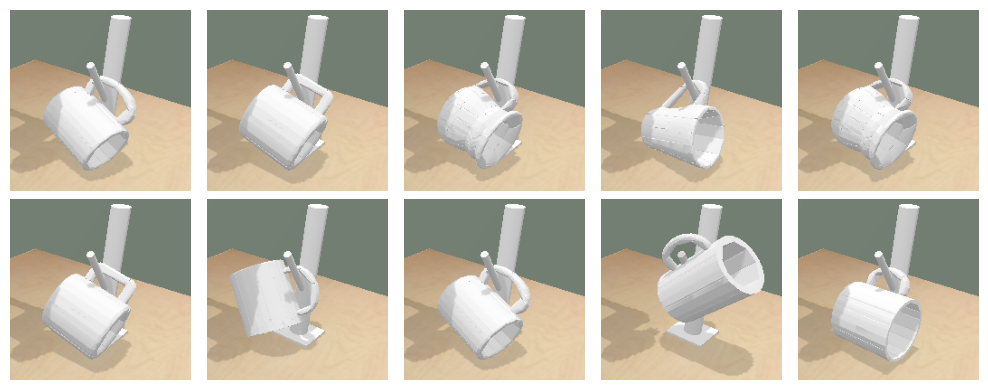}
    \caption{Examples from the dataset of mugs used to train our method. Notice that the mugs vary in geometry, but have similar overall structure.}
    \label{fig:examples-of-mugs}
\end{figure}


\clearpage
\section{Additional RLBench Experiments}
\label{appendix:rlbench}

\begin{wrapfigure}{r}{0.6\textwidth}
    \vspace{-.7cm}
    \includegraphics[width=.6\textwidth]{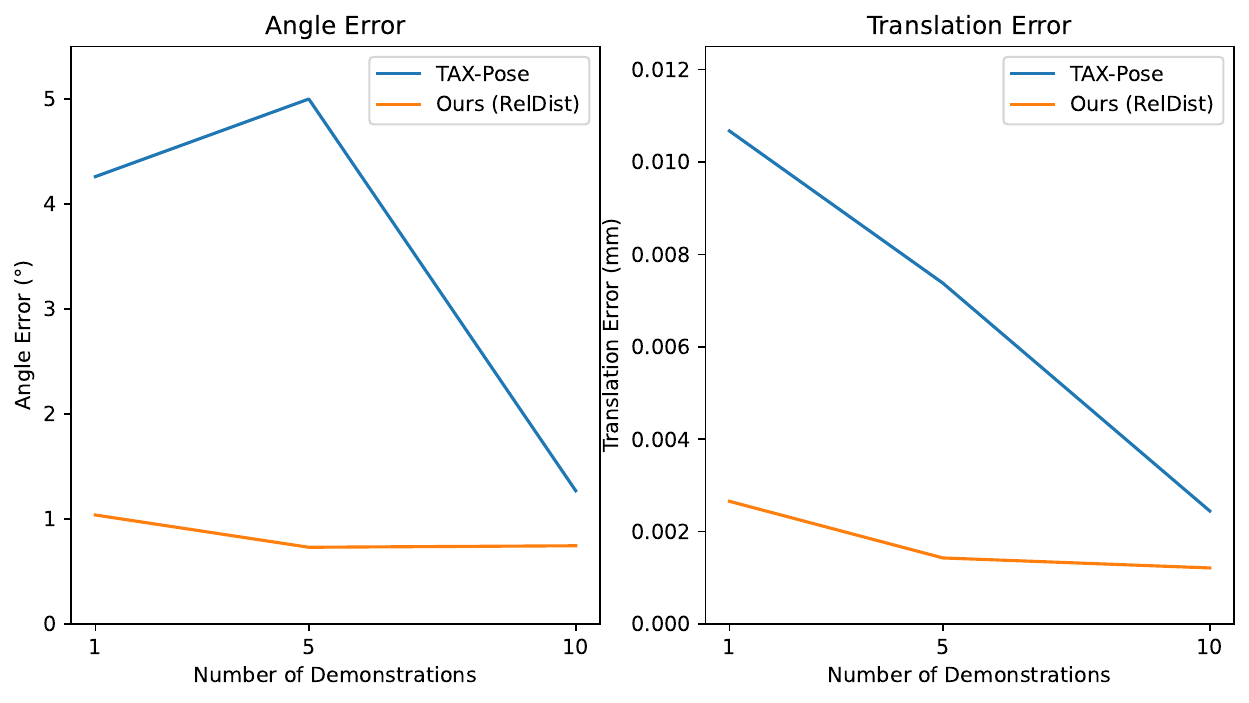}
    \caption{\hl{A comparison between number of training demonstrations and prediction precision on the \stackwine~task.}}
    \label{fig:ablation}
\end{wrapfigure}

\textbf{Ablation: Sample Efficiency}: To understand how the number of training demonstrations affects the precision of our method, we train variants of our method and TAX-Pose with 1, 5, and 10 training demos from the \stackwine~task. Results are found in Table \ref{fig:ablation}. We find that our method achieves very high levels of precision with only a single demonstration, and has substantially higher sample efficiency than TAX-Pose.

\vspace{1cm}


\section{Additional NDF Task Information and Results}
\label{appendix:additional}

\subsection{Evaluation Metrics}

\begin{wrapfigure}{r}{0.5\textwidth}
  \vspace{-.5cm}
  \begin{center}
    \includegraphics[width=0.23\textwidth, trim={350px 150px 90px 150px},clip]{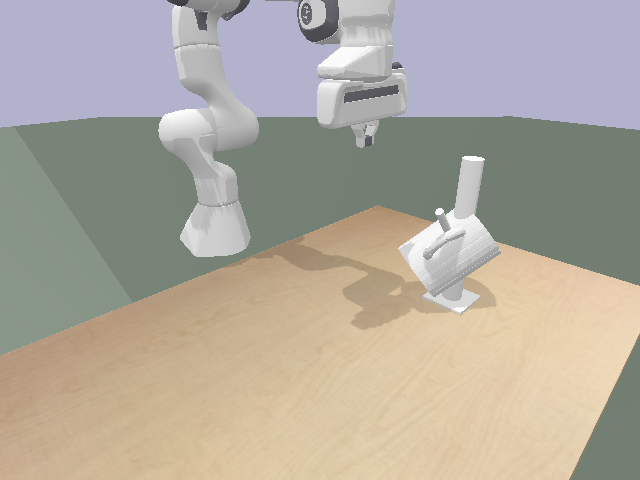}%
    ~ 
    \includegraphics[width=0.23\textwidth, trim={350px 150px 90px 150px},clip]{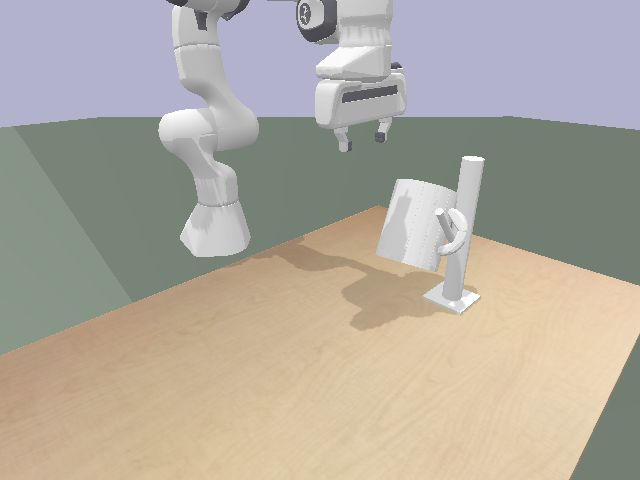}
  \end{center}
  \caption{Comparison of a mug placement with and without penetration. Left: a placement with 3.4cm of collision. Right: a placement with 0.7cm of collision.}
  \vspace{-.2cm}
  \label{fig:intersect}
\end{wrapfigure}

In the original NDF and TAX-Pose papers, the performance of each method is evaluated based on whether the object is successfully grasped by the gripper (``Grasp Success''), successfully placed in a goal location (``Place Success''), and both successfully grasped and successfully placed in the same episode (``Overall Success''). 

When considering the \textit{precision} of a pose prediction method for a task, however, there is one critical detail of the original evaluation metric which impedes our ability to measure precision: specifically, in the original evaluation protocol, placement success is measured by teleporting an object (without collision) to its predicted pose, enabling collisions, stepping a physics simulator for 5 seconds, and detecting whether or not the object has hit the floor at the end of this period. This gives rise to the following degenerate case: if a prediction is made which would result in an intersection of the object with the gripper or environment (for instance, the mug is in direct intersection with the base of the rack), the object may simply stick in place because of the intersection and not fall to the ground (despite being geometrically invalid). See Figure \ref{fig:intersect} for a visualization of a geometrically invalid and valid placement, both of which are measured as a  ``success'' in the original paper.

To get some sense of how precise these methods actually are, we propose a simple modification to the success metric: after the mug has been teleported to a placement location, we detect whether any penetrations between the mug and the rack have occurred (a trivial computation in our simulated environment), and mark the placement as a failure if the penetration magnitude exceeds a certain threshold (we evaluate on thresholds $1cm, 3cm, $ and unbounded). Like before, we also mark the placement as a failure if the object falls on the floor after being released by the gripper.  

\clearpage
\subsection{Baselines}
\label{appendix:baselines}
We compare our method to the following baselines:

\begin{itemize}
    \item Dense Object Nets \cite{Florence2018-te}: Dense Object Nets learns a representation which is viewpoint-invariant, and then attempts to match a novel image with an existing demonstration. The approach to using Dense Object Nets to solve the relative placement task is given in \citet{Simeonov2022-re}.
    \item Neural Descriptor Fields \cite{Simeonov2022-re}: Learns an SE(3)-equivariant field, which is then optimized at test time to match a set of demonstrations.
    \item TAX-Pose \cite{Pan2023-ua}: This approach shares some similarities with ours, except that this approach learns a residual flow field which is not SE(3)-invariant by construction, instead of our method which estimates pairwise distances.
\end{itemize}

\subsection{Additional Experiments}

\setlength{\tabcolsep}{5pt}
\renewcommand{\arraystretch}{1.4}


\begin{table}[t]
    \setlength{\thickmuskip}{0mu}
    \centering
    \begin{tabular}{V{3}c|c|ccc|cccV{3}}
    \Xhline{3\arrayrulewidth}
    \multicolumn{8}{V{3}cV{3}}{\textbf{Mug Hanging} (Upright/Arbitrary)} \\ \Xhline{3\arrayrulewidth}

    & \textbf{Grasp} & \multicolumn{3}{c|}{\textbf{Place}} & \multicolumn{3}{cV{3}}{\textbf{Overall}} \\ \hline
    Method & & \eplt{1} & \eplt{3} & \textit{$\infty$}  & \eplt{1} & \eplt{3}  & \textit{$\infty$} \\ \hline
DON\footnote{\cite{Florence2018-te}}                & 0.91/0.35       & - & -           & 0.50/0.45  & - & -                         & 0.45/0.17 \\
NDF \footnote{\cite{Simeonov2022-re}}               & 0.96/0.78       & - & -           & 0.92/0.75  & - & -                         & 0.88/\textbf{0.58} \\
    TAX-Pose \footnote{\cite{Pan2023-ua}}           & \textbf{1.00}/\textbf{0.50} & 0.15/0.32 & 0.70/\textbf{0.75} & 0.93/0.85 & 0.15/0.14 & 0.70/\textbf{0.40} & 0.93/0.44\\ \hline
    Ours                                            & \textbf{1.00}/0.43 & \textbf{0.35}/\textbf{0.36} & \textbf{0.78}/0.66 & \textbf{0.99}/\textbf{0.88} & \textbf{0.35}/\textbf{0.19} & \textbf{0.78}/0.32 & \textbf{0.99}/0.41 \\

    \Xhline{3\arrayrulewidth}
\multicolumn{8}{V{3}cV{3}}{\textbf{Bowl Placement} (Upright/Arbitrary)} \\ \Xhline{3\arrayrulewidth}

    & \textbf{Grasp} & \multicolumn{3}{c|}{\textbf{Place}} & \multicolumn{3}{cV{3}}{\textbf{Overall}} \\ \hline
    Method & & \eplt{1} & \eplt{3} & \textit{$\infty$}  & \eplt{1} & \eplt{3}  & \textit{$\infty$} \\ \hline
    DON                 & 0.50/0.08       & - & -           & 0.35/0.20  & - & -                & 0.11/0.00 \\
    NDF                 & 0.91/0.79       & - & -           & \textbf{1.00}/\textbf{0.97}  & - & -                & \textbf{0.91}/\textbf{0.78} \\
    TAX-Pose            & \textbf{0.87}/\textbf{0.66} & \textbf{0.47}/\textbf{0.40} & \textbf{0.53}/\textbf{0.56}  & 0.57/0.64 & \textbf{0.37}/\textbf{0.26} & \textbf{0.43}/\textbf{0.37} & 0.47/0.44 \\ \hline
    Ours                & 0.67/0.59 & 0.36/0.19 & 0.49/0.42 & 0.66/0.68 & 0.26/0.08 & 0.35/0.23 & 0.46/0.35 \\ 

\Xhline{3\arrayrulewidth}
    \multicolumn{8}{V{3}cV{3}}{\textbf{Bottle Placement} (Upright/Arbitrary)} \\ \Xhline{3\arrayrulewidth}
    & \textbf{Grasp} & \multicolumn{3}{c|}{\textbf{Place}} & \multicolumn{3}{cV{3}}{\textbf{Overall}} \\ \hline
    Method & & \eplt{1} & \eplt{3} & \textit{$\infty$}  & \eplt{1} & \eplt{3}  & \textit{$\infty$} \\ \hline
    DON                 & 0.79/0.05       & - & -           & 0.24/0.02  & - & -                & 0.24/0.01 \\
    NDF                 & 0.87/0.78       & - & -           & \textbf{1.00}/\textbf{0.99}  & - & -                & \textbf{0.87}/\textbf{0.77} \\
    TAX-Pose            & 0.24/0.16 & \textbf{0.89}/\textbf{0.58} & \textbf{0.92}/\textbf{0.65} & 0.99/0.72 & \textbf{0.22}/\textbf{0.14} & 0.23/0.14 & 0.24/0.14 \\ \hline
    Ours                & \textbf{0.65}/\textbf{0.42} & 0.25/0.19 & 0.47/0.42 & 0.52/0.53 & 0.17/0.08 & \textbf{0.35}/\textbf{0.22} & 0.37/0.25 \\ \Xhline{3\arrayrulewidth}
    \end{tabular} \\ 
    \vspace{1em}
    
    \caption{NDF Tasks, Success Rate ($\uparrow$ is better): The success rate of each method when running the relative placement tasks with the objects starting either upright on the table (left number) or in arbitrary poses above the table (right number). Each column group shows the success rates for the \textbf{Grasp} phase and \textbf{Place} phase, as well as \textbf{Overall} performance (when both grasp and place are successful in a trial). Additionally, for \textbf{Place} and \textbf{Overall} we report success when maximum penetration is thresholded at the given distance (1cm, 3cm, $\infty$). In Mug hanging, our method outperforms nearly every baseline. 
    }
    \label{tab:ndf_results}
    \vspace{-0.25cm}
\end{table}

In this section, we present additional simulated results on two tasks presented in the NDF paper \cite{Simeonov2022-re}: Bottle Placement and Bowl Placement. Training datasets for these tasks are generated in an identical fashion to the NDF Mug-Hanging task described in Section \ref{section:experiments}, as are the evaluation procedures and metrics.

\textbf{NDF Bottle Placement Task}: In this task, the robot must grasp a bottle, and place it on a shelf in its workspace. The bottle is frequently symmetric. Following \cite{Pan2023-ua}, we add an additional color channel to the object to break this symmetry.

\textbf{NDF Bowl Placement Task}: Similar to bottle, in this task, the robot must grasp a bowl, and place it on a shelf in its workspace. The bowl is frequently symmetric. Following \cite{Pan2023-ua}, we add an additional color channel to the object to break this symmetry.

\subsection{Analysis} We find that our method performs comparably to the closest baseline, TAX-Pose, when trained using the exact code and training procedure provided by \cite{Pan2023-ua}. However, performance for both our retrained version of TAX-Pose and our own method are substantially worse (by roughly the same amount) than the original reported TAX-Pose results - this is primarily due to the fact that no implementation of symmetry-breaking technique described in \cite{Pan2023-ua} was provided in the initial code release, and we were not able to reproduce reported results with our own implementation. \cite{Pan2023-ua} report this technique as being very important for good performance. We believe that additional details are needed to reproduce the strong performance reported in \cite{Pan2023-ua}, and will update this manuscript once we obtain the implementation from the authors.


\section{Ablations}
\label{appendix:ablations}

We further perform the following ablations to understand the effect of changing different parts of our method.  Ablation results can be found in Tables~\ref{tab:ablation-se3}, \ref{tab:ablation-upright2},  and \ref{tab:ablation-arbitrary2}.

\setlength{\tabcolsep}{1.5pt}
\renewcommand{\arraystretch}{1.2}


\begin{table}[h!]
    \setlength{\thickmuskip}{0mu}
    \centering
    \begin{tabular}{|c|c|ccc|ccc|}
\Xhline{3\arrayrulewidth}
    \multicolumn{8}{V{3}cV{3}}{\textbf{Mug Hanging} $\mathbf{(z/SE(3))}$} \\ \Xhline{3\arrayrulewidth}

    & \textbf{Grasp} & \multicolumn{3}{|c|}{\textbf{Place}} & \multicolumn{3}{|c|}{\textbf{Overall}} \\ \hline
    Method & & \eplt{1}  & \eplt{3} & \textit{$\infty$}  & \eplt{1} & \eplt{3}  & \textit{$\infty$} \\ \hline
    
    TAX-Pose, no SE(3)  & 0.29 & 0.01 & 0.35 & 0.37 & 0.00 & 0.05 & 0.06 \\
    Ours, no SE(3) & \textbf{0.35} & \textbf{0.24} & \textbf{0.77} & \textbf{0.87} & \textbf{0.05} & \textbf{0.27} & \textbf{0.31} \\\hline
    \end{tabular}
    \caption{Ablation: Removing SE(3) Data Augmentation -  Arbitrary Mug Hanging, Success Rate ($\uparrow$ is better): Compared to the standard training proposed by TAX-Pose (\cite{Pan2023-ua}), we remove all SE(3) data augmentations from the training pipeline. We train only on demonstration poses $\mathbf{z}$ directly (which are mostly vertical). TAX-Pose performance degrades significantly, while our method's performance remains strong.}
    \label{tab:ablation-se3}
\end{table}
\textbf{Impact of SE(3) Data Augmentations}: In the original training procedure for TAX-Pose (\cite{Pan2023-ua}), the authors introduce substantial data augmentation in the form of random SE(3) transforms to inputs in order to achieve neural networks which are approximately SE(3)-equivariant. To demonstrate that our method can achieve high success rates \textit{without} these complex data augmentations, we train a version of both our provable-equivariant model, as well as the standard TAX-Pose model. We report our findings in Table \ref{tab:ablation-se3}. We find that while TAX-Pose fails entirely to generalize to mugs in novel configurations, because our method is provably SE(3)-Equivariant, it demonstrates strong performance across SE(3) even when trained on mugs in a narrow range of starting poses.

\textbf{Pretraining}: We evaluate the effect of pretraining on performance for both tasks. The optimal setting of $f_{\theta_\MA}, f_{\theta_\MB}$ are $SE(3)$-Invariant, which the pretraining process approximately initializes for DGCNN (our VN-DGCNN is guaranteed to be invariant). In \citet{Pan2023-ua}, the authors show that pretraining the standard DGCNN encoder improves test-time performance. However, we want to see whether or not this pretraining procedure improves performance for the VN-DGCNN version (which is guaranteed to be invaraint). Intuitively, this pretraining would simply initialize the representation for all points on each object to be far from one another.  Interestingly, in addition to helping training converge faster, pretraining substantially improves performance.

\textbf{Sampling}: We evaluate whether the value of $K$ in the sampling procedure for computing the kernel matrix $\RAB$ detailed in Appendix \ref{appendix:training} affects performance (see ``VN-DGCNN ($K=100$)'' in Tables~\ref{tab:ablation-upright2} and \ref{tab:ablation-arbitrary2}). We find that increasing the number of sampled points improves the performance, which intuitively suggests that the multilateration optimization will be able to reduce the variance of predictions (and thus increase prediction precision) with more samples.

\textbf{Vector Neurons}: We evaluate how training a standard DGCNN network to predict $SE(3)$-invariant features for each object encoder compares to using the fully $SE(3)$-invariant VN-DGCNN to predict those same features. We find that the standard DGCNN network is superior on some precision metrics, while VN-DGCNN has superior overall performance. Because we are using networks of a similar size, this suggests that there may a representational penalty for limiting a network to be fully-equivariant. Additionally, many of the mugs are perfectly or nearly symmetrical - this means that the VN-DGCNN network may not have sufficient representational flexibility to differentiate between two instances, i.e. to ``tiebreak'' depending on the object's initial orientation. 


\setlength{\tabcolsep}{1pt}
\renewcommand{\arraystretch}{1.2}


\begin{table}[h!]
    \setlength{\thickmuskip}{0mu}
    \centering
    \begin{tabular}{|c|c|ccccc|ccccc|}
    \hline
    & \textbf{Grasp} & \multicolumn{5}{|c|}{\textbf{Place}} & \multicolumn{5}{|c|}{\textbf{Overall}} \\ \hline
    Method & & $=0$cm & \eplt{1} & \eplt{2} & \eplt{3} & \textit{$\infty$} & $=0$cm  & \eplt{1} & \eplt{2} & \eplt{3}  & \textit{$\infty$} \\ \hline
    DON              & 0.91       & - & - & - & - & 0.50  & - & - & - & - & 0.45 \\
    NDF              & 0.96       & - & - & - & - & 0.92  & - & - & - & - & 0.88 \\
    TAX-Pose (release)   & \textbf{1.00}  & 0.01 & 0.07 & 0.26 & 0.61 & 0.95 & 0.01 & 0.07 & 0.26 & 0.61 & 0.95 \\ 
    TAX-Pose (retrained)  & \textbf{1.00}	& 0.04 & 0.24 &	0.44 & 0.67 & 0.96 & 0.04 & 0.24 & 0.44 & 0.66 & 0.96 \\ \hline

    VN-DGCNN ($K=100$) & 1.00 &  0.02  &0.16  &0.36 &  0.64  &0.97  &0.02  &0.16  &0.36  &0.64  &0.97 \\\hline 
    VN-DGCNN (no-pretrain) & 0.99 & 0.03 & 0.21 & 0.49 & 0.79 & 0.93 & 0.03 & 0.20 & 0.48 & 0.78 & 0.92 \\\hline
     DGCNN                          & 0.98  & \textbf{0.05} & \textbf{0.31} & \textbf{0.64} & \textbf{0.83} & 0.97 & \textbf{0.05} & \textbf{0.30} & \textbf{0.63} & \textbf{0.82} & 0.96 \\ 
    VN-DGCNN (Ours)                                    & \textbf{1.00}  & 0.03 & 0.28 & 0.57 & 0.80 & \textbf{0.98} & 0.03 & 0.28 & 0.57 & 0.80 & \textbf{0.98} \\ \hline
    \end{tabular}
    \caption{Ablations - Upright Mug Hanging, Success Rate ($\uparrow$ is better): The success rate of each method when running the mug-hanging task with the mugs starting upright on the table. Each column group shows the success rates for the \textbf{Grasp} phase and \textbf{Place} phase, as well as \textbf{Overall} performance (when both grasp and place are successful in a trial). Additionally, for \textbf{Place} and \textbf{Overall} we report success when maximum penetration is thresholded at the given distance. Across nearly all metrics, our method outperforms the baselines.
    }
    \label{tab:ablation-upright2}
\end{table}


\begin{table}[h!]
    \setlength{\thickmuskip}{0mu}
    \centering
    \begin{tabular}{|c|c|ccccc|ccccc|}
    \hline
    & \textbf{Grasp} & \multicolumn{5}{|c|}{\textbf{Place}} & \multicolumn{5}{|c|}{\textbf{Overall}} \\ \hline
    Method & & $=0$cm & \eplt{1} & \eplt{2} & \eplt{3} & \textit{$\infty$} & $=0$cm  & \eplt{1} & \eplt{2} & \eplt{3}  & \textit{$\infty$} \\ \hline
    DON               & 0.35 & - & - & - & - & 0.45  & - & - & - & - & 0.17 \\
    NDF               & \textbf{0.78} & - & - & - & - & 0.75  & - & - & - & - & 0.58 \\
    TAX-Pose  (release)    & 0.74 & \textbf{0.04} & 0.18 & 0.48 & \textbf{0.74} & 0.89 & \textbf{0.03} & 0.14 & 0.35 & \textbf{0.55} & \textbf{0.65} \\
    TAX-Pose  (retrained)  & 0.68 & 0.03 &	0.20 & 0.42 & 0.61 & 0.86 &	0.01 & 0.14 & 0.30 & 0.42 & 0.58\\\hline \hline
    VN-DGCNN ($K=100$) & 0.61 & 0.04 & 0.22 & 0.39 & 0.63 & 0.91 & 0.02 & 0.12 & 0.22 & 0.39 & 0.56 \\\hline 
    VN-DGCNN (no-pretrain) & 0.44 & \textbf{0.05} & 0.28 & 0.52 & 0.71 & 0.82 & 0.02 & 0.13 & 0.24 & 0.32 & 0.37\\\hline
    DGCNN                             & 0.68 & 0.04 & \textbf{0.30} & 0.51 & 0.70 & 0.89 & \textbf{0.03} & \textbf{0.20} & 0.35 & 0.48 & 0.61          \\ 
    VN-DGCNN (Ours)                                    & 0.67 & 0.04 & 0.29 & \textbf{0.55} & 0.72 & \textbf{0.92} &	\textbf{0.03} & 0.18 & \textbf{0.36} & 0.47 & 0.62\\ \hline
    \end{tabular}
    \caption{Ablations -  Arbitrary Mug Hanging, Success Rate ($\uparrow$ is better): The success rate of each method when running the mug-hanging task with the mugs starting in arbitrary orientations above the table. }
    \label{tab:ablation-arbitrary2}
\end{table}



\clearpage
\section{Real-World Experiments}
\label{appendix:real-world}

\begin{wraptable}{r}{0.45\textwidth}
\centering
\begin{tabular}{|c|c|c|}
\hline
\textbf{Method} & \textbf{Trans. Error (m)} & \textbf{Rot. Error ($^\circ$)} \\ \hline
\textbf{Ours} & 0.1 & 35$^\circ$ \\
\hline
\end{tabular}
\caption{Average prediction error on \textbf{unseen}, real-world mugs. The metrics are computed comparing the predicted position of the mug to the actual position of the mug in the demonstration.}
\label{tab:real-metrics}
\end{wraptable}

\textbf{Experiment Description}: To demonstrate the ability of our method to generalize to real-world robotics scenarios, we design an experiment that closely follows the real-world experiments proposed in \cite{Pan2023-ua}. Specifically, we choose the real-world mug-hanging experiment from this work. The authors from \cite{Pan2023-ua} have graciously provided us with a dataset real-world mug-hanging demonstrations, collected in the setup displayed in Figure \ref{fig:real-examples}. This dataset consists of 20 different demonstrations of various mugs placed on a single rack, where the rack is placed in different places in the workspace of a robot. The demonstrations consist of initial and final observations of the mug and rack. We split this dataset into 10 training mugs and 10 test mugs. We then train our method in the exact same way as we did our simulated mugs, and evaluate on the test mugs.

\textbf{Analysis of Results}: Our evaluation is both quantitative and qualitative. First, our quantitative evaluation procedure is to randomly perturb a real test demo capture, and attempt to predict its a success configuration. We then compute a translational and rotational error, to meausre how far the mug was placed relative to the test demonstrations. See Table \ref{tab:real-metrics}. Because the mug can rotate freely around the peg, the relatively high rotation error is not indicative of poor predictions - in fact, these metrics are frequently within the tolerance for successful mug hanging given scene geometry.

To further qualitatively support this claim, consider real predictions we provide in Figure \ref{fig:real_preds}. We see that predictions are consistent with successful mug-hanging, with the handle placed over the peg of the rack. Furthermore, the format of these predictions (a simple rigid transform) is identical to those in \cite{Pan2023-ua}, which led in their work to successful robotic placement in a real system. While we do not execute a motion policy based on these predictions on a real-world robot arm, we believe that this qualitative and quantitative evidence provides strong support for our claim that our model could be successfully used in a real-world robot system similar to that proposed by \cite{Pan2023-ua}.

\begin{figure}[ht]
\centering
\begin{subfigure}{.36\textwidth}
  \centering
  \includegraphics[width=\textwidth]{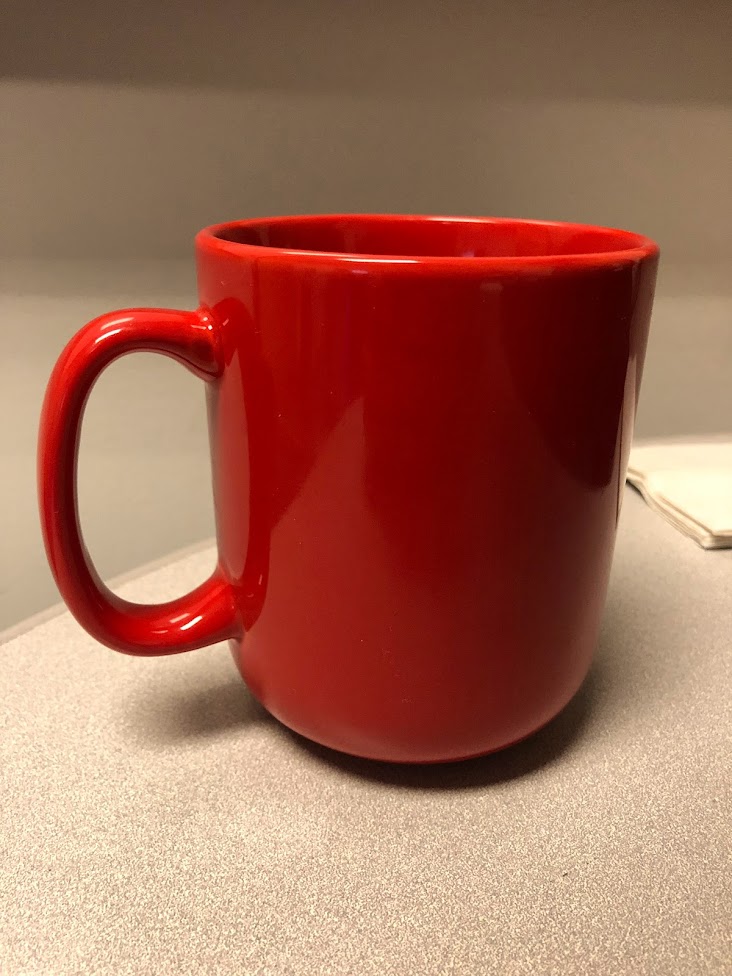}
  \caption{A real training mug}
  \label{fig:real-mug}
\end{subfigure}%
\begin{subfigure}{.64\textwidth}
  \centering
  \includegraphics[width=\textwidth]{figures/real_world/real_mug_on_rack.jpg}
  \caption{A demonstration of mug-hanging}
  \label{fig:mug-demo}
\end{subfigure}
\caption{An example of how demonstrations were collected in a real-world robot workspace. Images provided by the authors of \cite{Pan2023-ua}.}
\label{fig:real-examples}
\end{figure}

\begin{figure}[ht]
\centering
\begin{subfigure}{.32\textwidth}
  \centering
  \includegraphics[width=\textwidth]{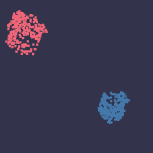}
  \caption{Captured mug point clouds}
  \label{fig:mugs}
\end{subfigure}
\begin{subfigure}{.32\textwidth}
  \centering
  \includegraphics[width=\textwidth]{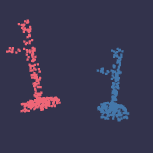}
  \caption{Captured rack point clouds}
  \label{fig:rack}
\end{subfigure}
\begin{subfigure}{.32\textwidth}
  \centering
  \includegraphics[width=\textwidth]{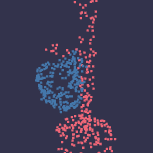}
  \caption{Captured mug-hanging demo}
  \label{fig:demo}
\end{subfigure}
\caption{Examples of training data collected with the setup in Figure \ref{fig:real-examples}. The mugs are segmented from the rack by the procedure outlined in \cite{Pan2023-ua}, and then used as training data for our method.}
\label{fig:obs-example}
\end{figure}

\begin{figure}[ht]
\centering
\begin{subfigure}{.24\textwidth}
  \centering
  \includegraphics[width=\textwidth]{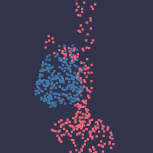}
  \label{fig:real-1}
\end{subfigure}
\begin{subfigure}{.24\textwidth}
  \centering
  \includegraphics[width=\textwidth]{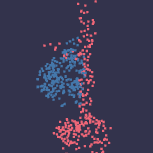}
  \label{fig:real-2}
\end{subfigure}
\begin{subfigure}{.24\textwidth}
  \centering
  \includegraphics[width=\textwidth]{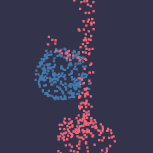}
  \label{fig:real-3}
\end{subfigure}
\begin{subfigure}{.24\textwidth}
  \centering
  \includegraphics[width=\textwidth]{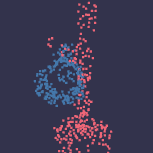}
  \label{fig:real-4}
\end{subfigure}
\caption{Images of distinct, unseen real-world mugs in their predicted goal configurations. Our method makes high-quality predictions on \textbf{novel mug instances} in \textbf{unseen configurations} from real-world sensor data. Because we make outputs in the same format as in \cite{Pan2023-ua}, \hl{we believe that these predictions are strong evidence that our method could successfully be used to solve relative placement tasks on a  real-world robot system similar to that proposed in }\cite{Pan2023-ua}.}
\label{fig:real_preds}
\end{figure}







\end{document}